\def\bA{{\bf A}}
\def\bH{{\bf H}}
\def\bI{{\bf I}}
\def\bZ{{\bf{Z}}}
\def\ba{{\bf a}}
\def\bc{{\bf c}}
\def\bg{{\bf g}}
\def\bh{{\bf h}}
\def\bq{{\bf q}}
\def\bv{{\bf v}}
\def\bx{{\bf x}}
\def\by{{\bf y}}
\def\bz{{\bf z}}
\begin{document}

\title{Quantized Adaptive Subgradient Algorithms and Their Applications}

\author{Ke Xu, Jianqiao Wangni, Yifan Zhang, Deheng Ye, Jiaxiang Wu  and  Peilin Zhao 
}
\renewcommand{\shortauthors}{Ke Xu, Jianqiao Wangni, Yifan Zhang, Deheng Ye, Jiaxiang Wu  and  Peilin Zhao}

\begin{abstract}
Data explosion and an increase in model size drive the remarkable advances in large-scale machine learning, but also make model training time-consuming and model storage difficult. To address the above issues in the distributed model training setting which has high computation efficiency and less device limitation, there are still two main difficulties. On one hand, the communication costs for exchanging information, e.g., stochastic gradients among different workers, is a key bottleneck for distributed training efficiency. On the other hand, less parameter model is easy for storage and communication, but the risk of damaging the model performance. To balance the communication costs, model capacity and model performance simultaneously, we propose quantized composite mirror descent adaptive subgradient (QCMD adagrad) and quantized regularized dual average adaptive subgradient (QRDA adagrad) for distributed training. To be specific, we explore the combination of gradient quantization and sparse model to reduce the communication cost per iteration in distributed training. A quantized gradient-based adaptive learning rate matrix is constructed to achieve a balance between communication costs, accuracy, and model sparsity. Moreover, we theoretically find that a large quantization error brings in extra noise, which influences the convergence and sparsity of the model. Therefore, a threshold quantization strategy with a relatively small error is adopted in QCMD adagrad and QRDA adagrad to improve the signal-to-noise ratio and preserve the sparsity of the model. Both theoretical analyses and empirical results demonstrate the efficacy and efficiency of the proposed algorithms.
\end{abstract}

\begin{CCSXML}
<ccs2012>
   <concept>
       <concept_id>10010147.10010919</concept_id>
       <concept_desc>Computing methodologies~Distributed computing methodologies</concept_desc>
       <concept_significance>500</concept_significance>
       </concept>
 </ccs2012>
\end{CCSXML}

\ccsdesc[500]{Computing methodologies~Distributed computing methodologies}



\keywords{Distributed training, adaptive subgradient, gradient quantization, sparse model.}

\maketitle

\section{Introduction}
The large model has recently been extremely successful in machine learning and data mining with the growth of data volume. A great number of complex deep neural networks \cite{zhao2018adaptive,zhang2018online,krizhevsky2012imagenet,zhang2019whole,cao2019multi} have been devised to solve real-world applications problem. However, many practical applications can only provide limited computing resources, i.e.  limited storage devices and unaccelerated hardware such as CPUs.
These constrain the model complexity and make model training extremely time-consuming under the huge amount of training datasets. 
To solve these issues, this paper explores to
accelerate model training and reduce model storage costs simultaneously for large-scale machine learning.

Distributed training offers a potential solution to solve the issue of long training time~\cite{balcan2016communication,chilimbi2014project,xing2015petuum,hsieh2017communication,zhang2017online}. 
The data parallelism \cite{li2017scaling, gupta2016model} is one of the most popular framework in distributed training. 
As shown in Fig.~\ref{parallel}, data parallelism framework has one or more computer workers connected via some communication networks, while multiple model replicas are trained in parallel on each worker. A global parameter server 
ensures the consistency among replicas of each worker  by collecting all gradients computed from different workers and then averaging them to update parameters. 
The goal is to optimize a global objective function formed by the average of a series of local loss functions derived from local computation on each worker. 

\begin{figure}[h]
    \centering
    \includegraphics[width=0.58\linewidth]{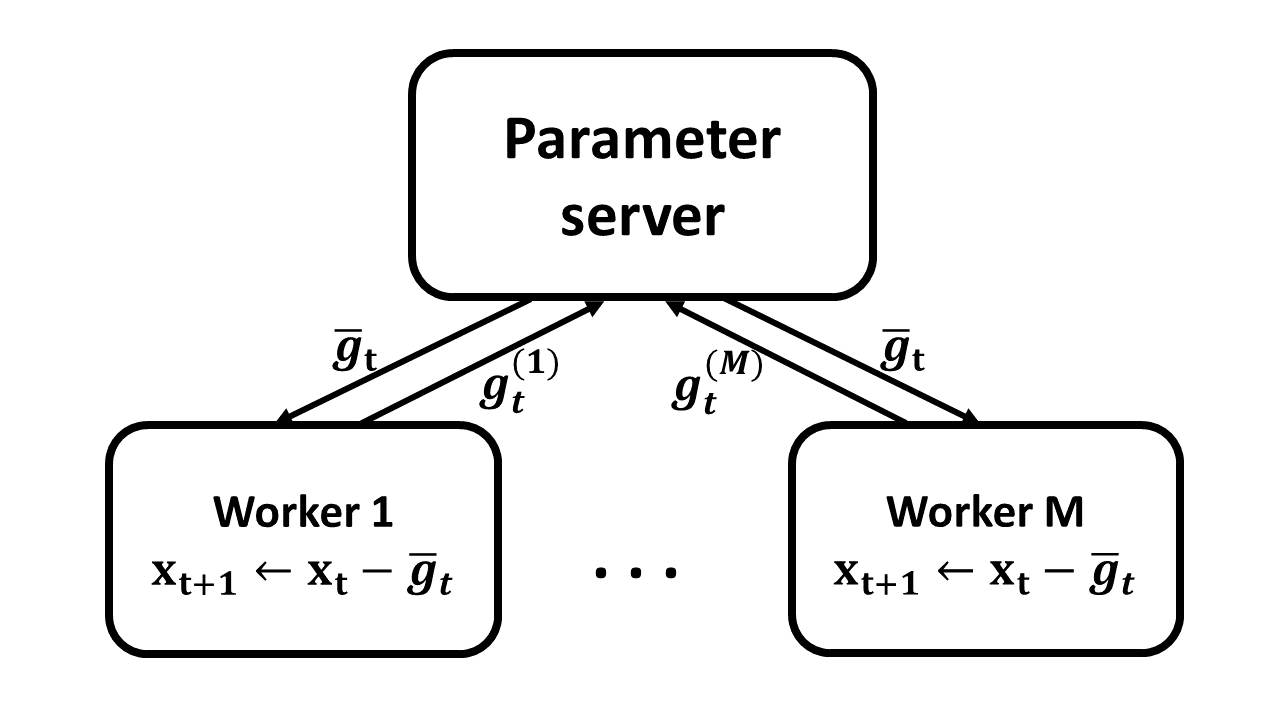}
    \caption{Data parallelism in a typical distributed training scheme, where $g_t^{(m)}$ denotes the local gradient computed by the $m^{th}$ worker, and $M$ denotes the total number of workers. The parameter server collects all local gradients from the workers. In addition, $\bar{g}_t$ is the averaged gradients, which is 
    synchronized by the parameter server and pulled by each worker to update the model.}
    \label{parallel}
\end{figure}

In the above distributed framework, time consumption is mainly caused by computation and communication.
While increasing the number of workers helps to reduce computation time, the communication overhead for exchanging training information, e.g., stochastic gradients among different workers, is a key bottleneck for training efficiency, especially for high delay communication network. Even worse, some slow workers can adversely affect the progress of fast workers, leading to a drastic slowdown of the overall convergence process \cite{eshraghi2020distributed}. Asynchronous communication \cite{huo2018asynchronous}, alleviate the negative effect of slow machines and decentralized algorithms \cite{lian2017can,xin2020decentralized} get rid of the dependency on high communication costs on the central node. But, the parameter variance among workers  may  deteriorate  model  accuracy\cite{ko2021aladdin}. In this paper, we concentrate on the data parallelism framework which belongs to synchronous centralized algorithms. There are many types of research focusing on how to save communication costs. 
Some studies \cite{bekkerman2011scaling, de2016efficient} propose to reduce the number of training rounds. 
 For example, one may use SVRG \cite{johnson2013accelerating,shah2016trading}  to periodically calculate an accurate gradient estimate  to reduce the variance introduced by stochastic sampling, which, however, is an operation with high calculation costs. 
 Other methods focus on reducing the precision of gradients. DoReFa-Net \cite{zhou2016dorefa} and QSGD \cite{alistarh2017qsgd} quantize gradients into fixed-point numbers, so that much fewer bits are needed to be transmitted. More aggressive quantization methods, such as 1-bit-sgd \cite{seide20141} and ternary gradient \cite{wen2017terngrad}, sacrifice a certain degree of expression power to achieve the purpose of reducing communication costs. \cite{tang2019doublesqueeze} studies the double squeeze of the gradient that not only the local gradient is compressed but also the synchronous gradient.

The size of the model is not only a determinant of memory usage but also an important factor of save the communication cost in distributed training.  
Although in data parallelism, we transmit the gradient instead of the model parameters. But for a sparse model , we can avoid transmitting unnecessary gradients corresponding to the parameters that always be 0.
The combination of the above bandwidth reduction methods and parallel stochastic gradient descent (PSGD) has been demonstrated to be effective for model training without model sparsity constraints. Therefore, inspired by some online algorithms such as adaptive composite mirror descent and adaptive regularized dual averaging  \cite{duchi2011adaptive}, which perform well in generating sparse models,
we design two corresponding communication efficient sparse model training algorithms for distributed framework, named quantized composite mirror descent adaptive subgradient \textbf{\textit{(QCMD adagrad)}} and quantized regularized dual averaging adaptive subgradient \textbf{\textit{(QRDA adagrad)}}.


To be specific, we define the distributed objective not only under a nonsmooth constraint to keep the model sparse, but also construct a proximal function base on a quantized gradient to achieve a balance between communication costs, accuracy, and model sparsity. The quantization is conducted not only for local gradients computed by all local workers but also for the aggregated gradient like the double squeeze \cite{tang2019doublesqueeze}.
We proof the convergence rate for distributed QCMD adagrad and QRDA adagrad is $O(\frac{1}{\sqrt{T}})$. Besides,
 our theoretical analysis shows that quantization introduces additional noise which  affects the model convergence and sparsity. Hence, we apply a threshold quantization method with small error to the gradient to reduce the influence of noise on model training. 

 
Our major contributions are summarized as follows:
\begin{itemize}
    \item We propose two communication efficient sparse model training algorithms for the distributed framework, namely  QCMD adagrad and QRDA adagrad.
    \item  We theoretically find that quantization noise affects the model convergence and sparsity, and thus we adopt a small error quantization method to alleviate noise influence. We provide theoretical results on the regret for the QCMD adagrad and QRDA adagrad. The convergence rate is $O(\frac{1}{\sqrt{T}})$.
    \item We apply QCMD adagrad and QRDA adagrad to both linear models and convolutional neural networks. Experimental results demonstrate the effectiveness and efficiency of the proposed methods in convex and non-convex problems.
\end{itemize}


\section{Related Work\label{sct_related_works}}
We detail related studies in three aspects as follows.

\textbf{Gradients sparsification.} Gradient sparsification \cite{seide20141} imposes sparsity onto gradients, where only a small fraction of elements of gradients are exchanged across workers based on their importance. Lin et al. \cite{lin2017deep} find most of the gradient exchange in distributed SGD are redundant, and propose a deep gradient compression method to cut the gradient size of ResNet-50 from 97MB to 0.35MB, and that of DeepSpeach from 488MB to 0.74MB.
Aji et al. \cite{aji2017sparse} sparsify gradients by removing the $R\%$ smallest gradients regarding absolute value. Simlarly, \cite{stich2018sparsified} propose sparsified SGD with top k-sparsification. Wangni et al.  \cite{wangni2017gradient} analyse how to achieve the optimal sparseness under certain variance constraints and the unbiasedness of the gradient vector is kept after sparsification. 

\textbf{Gradients quantization.}
Gradient quantization replaces the original gradient with a small number of fixed values. 
Quantized SGD (QSGD) \cite{alistarh2017qsgd} adjusts the number of bits of the exchanging gradients
to balance the bandwidth and accuracy.
More aggressive quantization methods, such as the binary representation \cite{seide20141}, cut each component of the gradient to its sign. 
TernGrad \cite{wen2017terngrad} uses three numerical levels \{-1, 0, 1\} and a scaler, e.g. maximum norm or $l_2$ norm of the gradient, to replace the 
full precision gradient. This aggressive method can be regarded as a variant of QSGD. To reduce the influence of noise introduced by  the aggressive quantization, Wu et al. \cite{wu2018error} utilize the accumulated quantization error to compensate the quantized gradient. Several applications, such as federated machine learning (ML) at the
wireless edge \cite{amiri2020machine}, benefit from the error compensation. In addition, \cite{magnusson2020maintaining} introduce a family of adaptive gradient quantization schemes which can enable linear convergence in any norm for gradient-descent-type algorithms.
\cite{alimisis2021communication} propose a quantized Newton’s method which is suitable for ill-conditioned but low-dimensional problems, as it reduces the communication complexity by a trade-off between the dependency on the dimension of the input features and its condition number. In \cite{alimisis2021communication}, lattice quantization \cite{davies2021new} that reduces the variance is adapted to quantize the covariance matrix.

\textbf{Stochastic optimization.} In the modern implementation of large-scale machine learning algorithms, stochastic gradient descent (SGD) is commonly used as the optimization method in distributed training frameworks because of its universality and high computational efficiency for each iteration. 
SGD intrinsically implies gradient noise, which helps to escape saddle points for non-convex problems, like neural networks \cite{jin2017escape, kleinberg2018alternative}.
However, when producing a sparse model, simply adding a subgradient of the $l_1$ penalty to the gradient of the loss does not essentially produce parameters that are exactly zero. More sophisticated approaches such as composite mirror descent  \cite{singer2009efficient,duchi2010composite} and regularized dual averaging \cite{xiao2010dual} do succeed in introducing sparsity, but the sparsity of model is limited. Their adaptive subgradient extensions (Adagrad) with $l_1$ regularization \cite{duchi2011adaptive} produce  even better accuracy vs. sparsity tradeoffs.
Compared with SGD which is very sensitive to the learning rate, Adagrad  \cite{duchi2011adaptive} dynamically incorporates knowledge of the geometry of the data curvature of the loss function
to adjust the learning rate of gradients. As a result, it requires no manual tuning of learning rate and appears robust to noisy gradient information and large-scale high-dimensional machine learning. 

\begin{figure*}[tp]
\vspace{-0.22in}
    \centering
    \includegraphics[width=\linewidth]{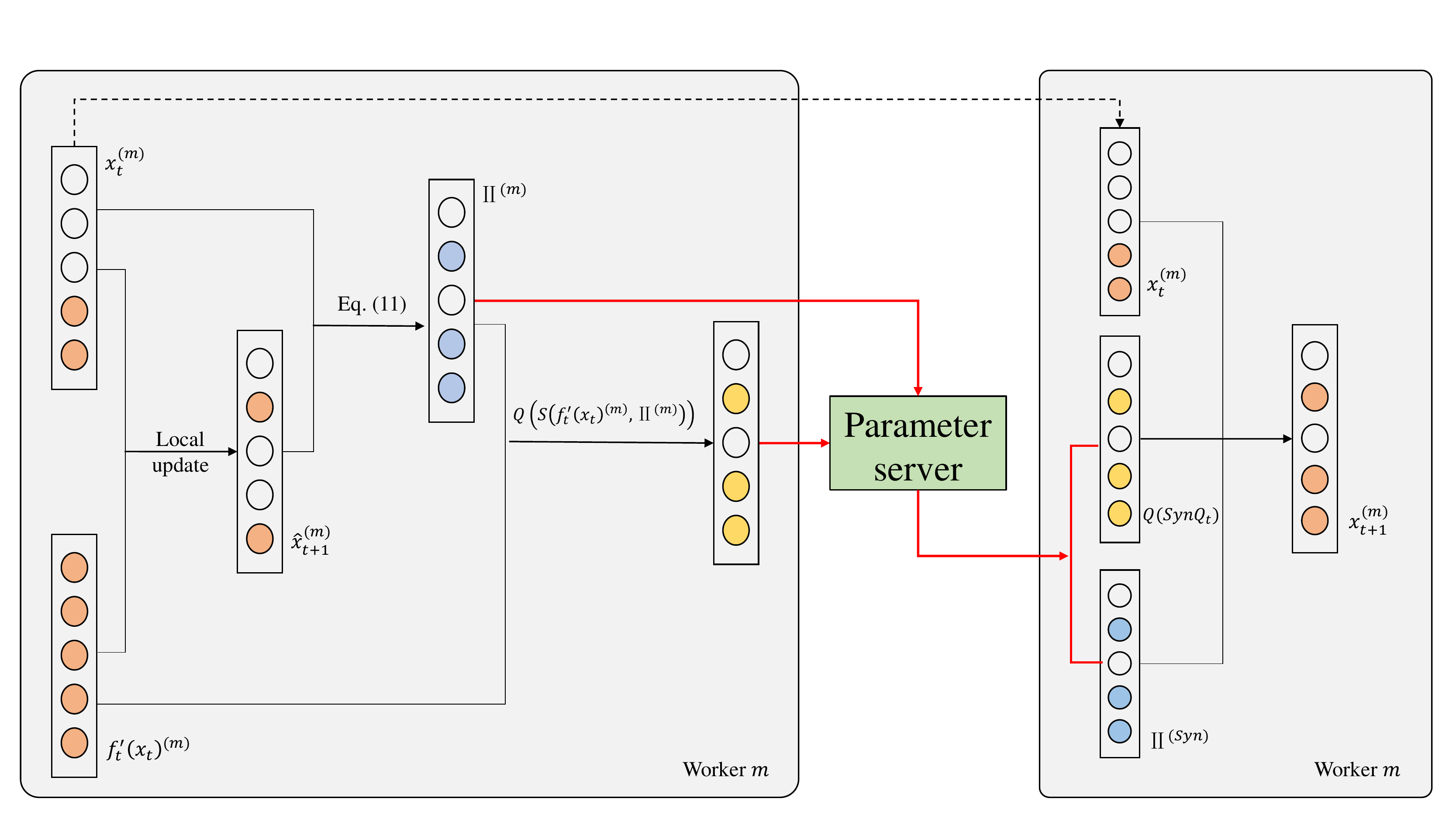}
\vspace{-0.22in}
    \caption{General framework of QCMD adagrad and QRDA adagrad. $\mathbb{I}$ is the indicator function defined in Eq.(\ref{ind}).  $\hat\bx_{t+1}^{(m)}$ is the local parameter that is updated based on  the  $f'_t(\bx_t)^{(m)}$.  $S(\cdot)$ is the selection function defined in Eq.(\ref{select}). Nodes with colors denote the non-zero elements. Quantization function $Q(\cdot)$ quantize the input. Both the indicators and the quantized selected gradients are needed to be sent to the server for the help of decoding. Finally, the parameter server synchronizes the indicators and quantized selected gradients from all $M$ workers and sent them to the local workers for the next step update.} 
    \label{indicator}
\end{figure*} 

\section{Problem Definition and our approach \label{sct_prob_def_appro}}
\textbf{Notations:}
Before the problem definition, we define the following notations.
Lower case bold letters, such as $\bv$, are used to denote vectors. Matrices are  capital case bold letters, like $\bA$. Scalars are lower case italic letters, such as $s$. We use [d] to denote the set $\{1,2,...,d\}$. For a vector $\bv\in\mathbb{R}^d$, the $l_q$ norm defined as $||\bv||_q=(\sum_{i\in[d]}|\bv_i|^q)^{\frac{1}{q}}$, and $||\bv||_\infty=\max_{i\in[d]}|\bv_i|$.
For a symmetric matrix $\bA \in \mathbb{R}^{d\times d}$ and any vector $\bx\in \mathbb{R}^d$, $\by\in \mathbb{R}^d$, 
the Bregman divergence associated with a strongly convex and differentiable function $\psi(\bx)=\frac{1}{2}\bx^\top \bA\bx$ is defined as $B_{\psi}(\bx,\by)=\frac{1}{2}(\bx-\by)^\top \bA(\bx-\by)$. The Mahalanobis norm $||\bx||_\psi=(\bx^\top \bA\bx)^{\frac{1}{2}}$ and it's associated dual norm $||\bx||_{\psi^*}=(\bx^\top \bA^{-1}\bx)^{\frac{1}{2}}$.
$<\bx,\by>$ is used to denote the inner product between $\bx$ and $\by$.
We also make frequent use of the following matrix, where $\bv_{1:t} = [\bv_1\cdot\cdot\cdot \bv_t]$ denote the matrix obtained by concatenating the vector sequence. We denote the $i^{th}$ row of $\bv_{1:t}$ by $\bv_{1:t,i}$, which amounts to the concatenation of the $i_{th}$ component of each vector.

\subsection{Problem Definition}
\textbf{Stochastic Optimization~\cite{Lee:2018:DQA:3219819.3220075}} is a popular approach in training large-scale machine learning models. We denote the objective function as $f(\bx)$, depending on the model parameter $\bx \in \chi$. The dataset could be split into $M$ workers, where each  has $N$ data samples. We denote $f_{m,n}$ as the loss function associated with the $n^{th}$ sample on the $m^{th}$ worker, and $f(\bx)$ is an average over all loss functions on all workers.
\begin{equation}
 f(\bx)=\frac{1}{M} \frac{1}{N} \sum_{m,n} f_{m,n} (\bx)
\end{equation}
When training on a single machine, 
the stochastic optimization, like SGD, randomly chooses a subset of samples $\bZ_t$ and calculates the gradient $f'_t(\bx_t,\bz_t)$, where $\bz_t\in\bZ_t$, to approximates the true gradient $ f'_t(\bx_t)$ at $t^{th}$ iteration by
\begin{equation}
  f'_t(\bx_t) = \frac{1}{|\bZ_t|}\sum_{\bz_t\in\bZ_t}(f'_t(\bx_t,\bz_t))
\end{equation}
Similarly, in a distributed setting, each worker calculates the stochastic gradient $f'_t(\bx_t)^{(m)}$ based on its local data $\bZ_t^{(m)}$. 
The parameter server averages the local gradients to get the synchronized gradient,
\begin{equation}
SynG_t(\bx_t)=\frac{1}{M} \sum_{m=1}^{M}  f'_t(\bx_t)^{(m)}
\end{equation}
As previous work on gradient sparsification and quantization suggested, we could represent $f'_t(\bx_t)^{(m)}$ in a low-precision way. Since extreme gradient sparsification can also be seen as gradient quantization, this paper focuses on reducing communication costs through gradient quantization. We denote $Q(\cdot)$ as the quantization function and then the synchronized quantized gradient can be obtained by
\begin{equation}
    SynQ_t=\frac{1}{M}\sum_{m=1}^{M}Q(f_t'(\bx_t)^{(m)})
\label{SynQ_t}
\end{equation}
The update formula for the synchronous distributed stochastic optimization with function $Q(\cdot)$ is 
\begin{equation}
 \bx_{t+1} = \bx_t -\eta \frac{1}{M} \sum_{m=1}^{M} Q(f_t'(\bx_t)^{(m)})
\end{equation}
where $\eta$ is the learning rate.


\subsection{Quantized CMD Adagrad and Quantized RDA Adagrad}

\textbf{General Framework}. In the data parallelism distributed training scheme, the number of the exchanged gradients is equal to the number of model learnable parameters. But for a sparse model, we can avoid transmitting unnecessary gradient information, i.e. gradient corresponding to parameters that are always 0 before and after model update. To this end, each worker utilizes their local full precision gradients to obtain the model parameters after the local update. 
With the assumption that the model parameters change smoothly and be sparsity, we can transmit a small amount of the necessary gradient to save the communication cost. Furthermore, this small amount of the local gradients is quantized and encoded. After the global parameter server receives and decodes the quantized gradients, synchronous gradients are calculated by averaging the local sparse quantized gradients. The synchronous sparse gradients are then quantized and sent to the workers for model parameter update. The general framework of QCMD adagrad and QRDA adagrad is shown in Fig. \ref{indicator}. 

\textbf{Generate Sparse Model}. Vanilla SGD is not particularly effective at producing a sparse model, even adding a subgradient of $l_1$ penalty to the gradient of the loss. Some other approaches such as proximal gradient descent (also named composite mirror descent) \cite{singer2009efficient,duchi2010composite} and regularized dual averaging \cite{xiao2010dual} introduce limited sparsity with $l_1$ regularization.  Their adaptive subgradient extensions, CMD adagrad and RDA adagrad,  \cite{duchi2011adaptive} produce  better accuracy vs. sparsity tradeoffs. 
  
  The original proximal gradient method \cite{duchi2010composite} employs an immediate trade-off among the current gradient $f_t'(\bx_t)$, the regularizer $\phi$ and proximal function $\psi$. The proximal function $\psi$ aims to keep $\bx$ staying close to $\bx_t$ and sometimes be simply set to $\psi(\bx)=\frac{1}{2}||\bx-\bx_t||^2_2$). It makes the model parameters meet the assumption of steady change. To achieve better bound of regret, the CMD adagrad adopts $\psi_t$ which varies with $t$.   The update  for CMD adagrad amounts to solving
\begin{equation}\label{CMD1}
\bx_{t+1}=\arg \min_{\bx\in\chi}{\eta f'_t(\bx_t)^\top}\bx+\eta \phi(\bx)+B_{\psi_t}(\bx,\bx_t)
\end{equation}
Similarly, the RDA adagrad encompasses a trade off among a gradient-dependent linear term, the regularizer $\phi$ and a strongly convex term $\psi_t$ for well conditioned predictions. The update for RDA adagrad is shown as:
\begin{equation}\label{RDA1}
    \begin{split}
        \bx_{t+1}=\arg\min_{\bx\in \chi}\{{\eta}<\frac{1}{t}\sum_{\tau=1}^tf'_\tau(\bx_\tau),\bx>+\eta\phi(\bx)+\frac{1}{t}\psi_t(\bx)\}
    \end{split}
\end{equation}
For both CMD adagrad and RDA adagrad, $\phi(\bx)=\lambda||\bx||_1$ encourages the sparsity of the model parameter. The Bregman divergence associated with $\psi_t(\bx)=\frac{1}{2}\bx^\top \bA_t\bx$ is defined as
\begin{align}
B_{\psi_t}(\bx,\bx_t)=\frac{1}{2}(\bx-\bx_t)^\top \bA_t(\bx-\bx_t)
\end{align}
$\bA_t$ is a diagonal matrix also named the adaptive learning rate matrix. Concretely, for some small fixed $\delta\geq 0$, the adaptive learning rate matrix is:
\begin{align}
\bA_t=\delta \bI+diag(\ba_t)
\end{align}
where $\ba_{t,d}=||f'(\bx)_{1:t,d}||_2$, $f'(\bx)_{1:t}=[f'(\bx)_{1:t-1},f'_t(\bx_t)]$ denote the matrix obtained by concatenating the gradient sequence.  $f'(\bx)_{1:t,d}$ is the $d^{th}$ row of this matrix and $diag(\cdot)$ converts a vector into a diagonal matrix.
$\bI$ is identity matrix.

To apply them to distributed learning, each worker needs to maintain a adaptive learning rate matrix locally. On the $m_{th}$ worker, the elements of the gradient $f'(\bx_t)^{(m)}$ are selected by a selected function:
\begin{align}
S(f'_t(\bx_t)^{(m)},\mathbb{I}^{(m)})=\{f'_{t,d}(\bx_t)^{(m)}|\mathbb{I}_d^{(m)}=1\}
\label{select}
\end{align}
where $\mathbb{I}^{(m)}$ is an indicator function that

\begin{align} \mathbb{I}_d=\left\{
\begin{aligned}
1, & \text{ } \bx_{t,d}\neq 0 \text{ or } \hat\bx_{t+1,d}\neq 0 \\
0, & \text{ otherwise}
\end{aligned}
\right.
\label{ind}
\end{align}
$\hat\bx_{t+1}^{(m)}$ is the local parameter that is updated based on  the local full precision gradient $f'_t(\bx_t)^{(m)}$ and local adaptive learning rate matrix $\bA_t^{(m)}$. Both the indicator and selected elements of the local gradients are sent to the global parameter server for decoding. The synchronous indicator is calculated by "bitwise or" between the $M$ indicators.

\textbf{Gradient quantization.} Besides reducing the transmission amount of the corresponding gradient by generating a sparse model, we also hope to further reduce the communication cost by quantifying the selected elements of the gradient. 
The quantization function is define as $Q(\cdot)$. Each worker computes $Q(S(f'_t(\bx_t)^{(m)},\mathbb{I}^{m}))$. The global parameter server first decodes the quantized gradient through $S^{-1}$ and computes the synchronized quantized gradient based on Eq.(\ref{SynQ_t}).
We furtherly quantize the synchronized gradient $Syn Q_t$ to save more communication costs by
 
\begin{equation}\label{Qt}
     \bq_t=Q(Syn Q_t)
\end{equation}
The server pushes the double quantized gradient $\bq_t$ to every worker. Specially, the adaptive learning rate is calculated by the worker according to $\bq_t$ sequence. So we construct a quantized gradient based adaptive learning rate matrix,
\begin{align}\label{Hq}
\bH_t=\delta \bI+diag(\bc_t)
\end{align}
where $\bc_{t,d}=||\bq_{1:t,d}||_2$, $\bq_{1:t}=[\bq_{1:t-1},\bq_t]$.  
$\bH_t$ dynamically incorporates knowledge of the geometry of the data and the curvature of the loss function based on quantized gradient, which makes QCMD adagrad and QRDA adagrad achieve a good balance between model sparsity, accuracy and communication cost. Therefore, the objective function for QCMD adagrad becomes
\begin{equation}\label{CMD2}
\bx_{t+1}=\arg \min_{x\in\chi}{\eta \bq_t^\top}\bx+\eta \phi(\bx)+B_{\psi_t}(\bx,\bx_t)
\end{equation}
The objective function for QRDA adagrad becomes
\begin{equation}\label{RDA2}
    \begin{split}
        \bx_{t+1}=\arg\min_{\bx\in \chi}\{{\eta}<\frac{1}{t}\sum_{\tau=1}^t\bq_\tau,\bx>+\eta\phi(\bx)+\frac{1}{t}\psi_t(\bx)\}
    \end{split}
\end{equation}
Solving Eq.~(\ref{CMD2}), we have the update rule for QCMD adagrad:
\begin{equation}\label{QCMD}
\bx_{t+1,i}=sign(\bx_{t,i}-\eta \bH_{t,ii}^{-1}\bq_{t,i})[|\bx_{t,i}-\eta \bH_{t,ii}^{-1}\bq_{t,i}|-\lambda\eta \bH_{t,i}^{-1}]_{+}
\end{equation}
The subscript $(\cdot)_+$ represents that we retain the value greater than 0. Solving Eq.~(\ref{RDA2}), the update rule for QRDA adagrad becomes:
\begin{equation}\label{QRDA}
\bx_{t+1,i}=sign(-\sum_{\tau=1}^t\bq_{\tau,i})t\eta \bH_{t,ii}^{-1}[|\frac{1}{t}\sum_{\tau=1}^t\bq_{\tau,i}|-\lambda]_{+}
\end{equation}

The overall QCMD adagrad and QRDA adagrad schemes are presented in Alg.~\ref{algorithm 1}.\\

\begin{algorithm}[b]
    \caption{ QCMD adagrad and QRDA adagrad.}\label{algorithm 1}        
    \textbf{for} $t=1$ \textbf{to} $T$\\
    \quad\textbf{for M worker:} $m=1,...,M$ \textbf{do in parallel}\\    
    \quad\quad Randomly chooses $\bZ_t^{(m)}$ from the local data.\\
    \quad\quad Compute the gradients $f_t'(\bx_t)^{(m)}$ under $\bZ_t^{(m)}$.\\
    \quad\quad Compute the $\hat{\bx}_{t+1}^{(m)}$ based on  $f'_t(\bx)^{(m)}$\\ 
    \quad\quad and $\hat{\bH}_{t}=\delta\bI+diag(\hat{\bc}_t)$, where\\
    \quad\quad\quad$\hat{\bc}_{t,d}=|| [\bq_{1:t-1},f_t'(\bx_t)^{(m)}] ||_2$.\\
    \quad\quad Compute the indicator function $\mathbb{I}^{m}$.\\
    \quad\quad Quantize the selected elements of the gradients with \\
    \quad\quad\quad $Q(S(f_t'(\bx_t)^{(m)},\mathbb{I}^{(m)}))$.\\
    \quad\quad Push $Q(S(f_t'(\bx_t)^{(m)},\mathbb{I}^{(m)}))$ and $\mathbb{I}^{m}$ to the server.    \\
    \quad\textbf{Server:}\\
        \quad\quad Decode $Q(S(f_t'(\bx_t)^{(m)},\mathbb{I}^{(m)}))$  to get  $Q(f_t'(\bx_t)^{(m)})$. \\
        \quad\quad Compute the synchronous indicator $\mathbb{I}^{Syn}$.\\
        \quad\quad Average the quantized gradients\\
        \quad\quad\quad                 $SynQ_t=\frac{1}{M}\sum_{m=1}^{M}Q(f_t'(\bx_t)^{(m)})$.\\
        \quad\quad Quantize the  aggregated gradient $Syn Q_t$\\
        \quad\quad\quad  $\bq_t=Q(Syn Q_t)$.\\
        \quad\quad Push $S(\bq_t,\mathbb{I}^{Syn})$ and $\mathbb{I}^{Syn}$ to every worker. \\
 
    \quad\textbf{for M worker:} $j=1,...,M$ \textbf{do in parallel}\\
    \quad\quad Pull $S(\bq_t,\mathbb{I}^{Syn})$ and $\mathbb{I}^{Syn}$ from the server.\\
    \quad\quad Decode $S(\bq_t,\mathbb{I}^{Syn})$  to get $\bq_t$. \\
    \quad\quad Calculate the adaptive learning rate\\ 
        \quad\quad\quad  $\bH_t=\delta \bI+diag(\bc_t)$,\\
    \quad\quad where $c_{t,d}=||\bq_{1:t,d}||_2$, $\bq_{1:t}=[\bq_{1:t-1},\bq_t]$.\\    
    \quad\quad QCMD adagrad: update $\bx_{t+1}$ based on  Eq.~(\ref{QCMD}).\\
    \quad\quad QRDA adagrad: update $\bx_{t+1}$ based on  Eq.~(\ref{QRDA}).\\
\end{algorithm}

\subsection{Quantization error}
In this section, we theoretically analyse that the quantization error introduced by the gradient quantization affects the convergence rate of regret for QCMD adagrad and QRDA adagrad. 
\begin{proposition}
Let the sequence ${\bx_t}$ be defined by the update (\ref{QCMD}), $SynQ_t$ defined by Eq.~(\ref{SynQ_t}), $\bq_t$ defined by Eq.~(\ref{Qt}), $\mathbb{E}[\bq_t]=f_t(\bx_t)$. The Mahalanobis norm $||\cdot||_{\psi_t}=\sqrt{<\cdot,\bH_t\cdot>}$ and $||\cdot||_{\psi^*_t}=\sqrt{<\cdot,\frac{1}{\bH_t}\cdot>}$ be the associated dual norm. $\bx^*$ is the optimal solution to $f(\bx)$, for any $\bx^*\in\chi$,
\begin{equation*}
\begin{split}
 &\quad\sum_{t=1}^{T}\mathbb{E}_\bq [f_t(\bx_{t+1})+\phi(\bx_{t+1})-f_t(\bx^*)-\phi(\bx^*)]\\
&\leq  \frac{1}{2}\sum_{t=1}^{T}\mathbb{E}_\bq||\bq_t||^2_{\psi^*_t}+\frac{1}{\eta}\mathbb{E}_\bq B_{\psi_1}(\bx^*,\bx_1)\\
&\quad+\frac{1}{\eta}\sum_{t=1}^{T-1}\mathbb{E}_\bq [B_{\psi_{t+1}}(\bx^*,\bx_{t+1})-B_{\psi_t}(\bx^*,\bx_{t+1})]\\
\end{split}    
\end{equation*}
\end{proposition}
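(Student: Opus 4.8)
The plan is to obtain a single-step inequality from the optimality of the proximal update (\ref{CMD2}) and then sum over $t$ and telescope. Three ingredients drive the argument: the first-order optimality condition for the strongly convex subproblem that defines $\bx_{t+1}$, convexity of both the regularizer $\phi$ and the losses $f_t$, and the exact \emph{law of cosines} identity for the quadratic Bregman divergence $B_{\psi_t}(\bx,\by)=\frac12(\bx-\by)^\top\bH_t(\bx-\by)$, namely $\langle\bH_t(\bx_{t+1}-\bx_t),\bx^*-\bx_{t+1}\rangle=B_{\psi_t}(\bx^*,\bx_t)-B_{\psi_t}(\bx^*,\bx_{t+1})-B_{\psi_t}(\bx_{t+1},\bx_t)$.

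First I would write the optimality condition for (\ref{CMD2}). Since $\bx_{t+1}$ minimizes the convex objective $\eta\bq_t^\top\bx+\eta\phi(\bx)+B_{\psi_t}(\bx,\bx_t)$ over $\chi$, there is a subgradient $\bg\in\partial\phi(\bx_{t+1})$ with $\langle\eta\bq_t+\eta\bg+\bH_t(\bx_{t+1}-\bx_t),\,\bx^*-\bx_{t+1}\rangle\ge 0$ for every $\bx^*\in\chi$. Rearranging and using convexity of $\phi$ (so that $\langle\bg,\bx_{t+1}-\bx^*\rangle\ge\phi(\bx_{t+1})-\phi(\bx^*)$) yields $\eta\langle\bq_t,\bx_{t+1}-\bx^*\rangle+\eta[\phi(\bx_{t+1})-\phi(\bx^*)]\le\langle\bH_t(\bx_{t+1}-\bx_t),\bx^*-\bx_{t+1}\rangle$. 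Substituting the law-of-cosines identity for the right-hand inner product is the step that simultaneously produces the telescoping Bregman structure and a leftover $-B_{\psi_t}(\bx_{t+1},\bx_t)$.

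Next I would dispose of the linear term $\langle\bq_t,\bx_{t+1}-\bx^*\rangle$. Splitting it as $\langle\bq_t,\bx_t-\bx^*\rangle+\langle\bq_t,\bx_{t+1}-\bx_t\rangle$, I would pair the displacement part with the leftover Bregman term: the Fenchel--Young (dual-norm) inequality gives $-\eta\langle\bq_t,\bx_{t+1}-\bx_t\rangle-B_{\psi_t}(\bx_{t+1},\bx_t)\le\frac{\eta^2}{2}\|\bq_t\|_{\psi_t^*}^2$, which is exactly the source of the $\|\bq_t\|_{\psi_t^*}^2$ term (matching the claim up to the learning-rate normalization). For the remaining part I would take the conditional expectation over $\bq_t$ and use unbiasedness $\mathbb{E}[\bq_t]=f'_t(\bx_t)$ together with convexity of $f_t$: since $\bx_t$ is independent of $\bq_t$, $\mathbb{E}\langle\bq_t,\bx_t-\bx^*\rangle=\langle f'_t(\bx_t),\bx_t-\bx^*\rangle\ge f_t(\bx_t)-f_t(\bx^*)$. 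Dividing by $\eta$ then bounds $f_t(\bx_t)+\phi(\bx_{t+1})-f_t(\bx^*)-\phi(\bx^*)$ by $\frac{\eta}{2}\|\bq_t\|_{\psi_t^*}^2+\frac{1}{\eta}[B_{\psi_t}(\bx^*,\bx_t)-B_{\psi_t}(\bx^*,\bx_{t+1})]$.

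Finally I would sum for $t=1,\dots,T$ and take total expectation via the tower property. The Bregman terms do not telescope directly because $\psi_t$ varies with $t$; reindexing the subtracted half and regrouping rewrites $\sum_t[B_{\psi_t}(\bx^*,\bx_t)-B_{\psi_t}(\bx^*,\bx_{t+1})]$ as $B_{\psi_1}(\bx^*,\bx_1)-B_{\psi_T}(\bx^*,\bx_{T+1})+\sum_{t=1}^{T-1}[B_{\psi_{t+1}}(\bx^*,\bx_{t+1})-B_{\psi_t}(\bx^*,\bx_{t+1})]$, and dropping the nonnegative end term $B_{\psi_T}(\bx^*,\bx_{T+1})$ leaves precisely the two Bregman expressions in the claim. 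The hard part is the expectation bookkeeping around the loss term: because $\bx_{t+1}$ is a function of $\bq_t$, the unbiasedness/convexity step is clean only at the $\bq_t$-independent point $\bx_t$, so reconciling this with the loss evaluated at $\bx_{t+1}$ as written is the delicate point; one must invoke convexity of $f_t$ at $\bx_{t+1}$ and absorb the induced slack, while keeping the Fenchel--Young trade against $B_{\psi_t}(\bx_{t+1},\bx_t)$ tight enough to leave only the advertised $\|\bq_t\|_{\psi_t^*}^2$ term.
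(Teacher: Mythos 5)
Your argument is the standard composite mirror descent (COMID) analysis, and up to the point you flag yourself it is sound: the optimality condition with a subgradient of $\phi$, the three-point (law of cosines) identity, the Fenchel--Young trade of $-\eta\langle\bq_t,\bx_{t+1}-\bx_t\rangle$ against $B_{\psi_t}(\bx_{t+1},\bx_t)$, the conditional-expectation step at the $\bq_t$-independent point $\bx_t$, and the regrouping of the non-telescoping Bregman sum all match what the paper does in its later stages. But what this route delivers is a bound on $\sum_t\mathbb{E}_\bq[f_t(\bx_t)+\phi(\bx_{t+1})-f_t(\bx^*)-\phi(\bx^*)]$, i.e.\ the loss evaluated at $\bx_t$, whereas the proposition is stated for $f_t(\bx_{t+1})$. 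The patch you propose for this --- ``invoke convexity of $f_t$ at $\bx_{t+1}$ and absorb the induced slack'' --- cannot work: convexity only provides lower bounds at points away from the tangent point, so it gives $f_t(\bx_{t+1})\le f_t(\bx_t)-f'_t(\bx_{t+1})^\top(\bx_t-\bx_{t+1})$, which involves $f'_t(\bx_{t+1})$, a quantity neither the algorithm nor the analysis has any handle on. This is a genuine gap, not bookkeeping.

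The paper closes exactly this gap with two ingredients you never invoke: $L$-smoothness of $f_t$ and the tied learning rate $\eta=\frac{2}{1+2L}$ (both appear explicitly in the paper's proof and in Theorem 1, even though the proposition's statement omits them). The paper's proof \emph{starts} from the descent lemma
\begin{equation*}
f_t(\bx_{t+1})-f_t(\bx_t)\le f'_t(\bx_t)^\top(\bx_{t+1}-\bx_t)+\tfrac{L}{2}\|\bx_{t+1}-\bx_t\|^2,
\end{equation*}
then swaps $f'_t(\bx_t)$ for $\bq_t$ and applies Cauchy--Schwarz to the residual $(f'_t(\bx_t)-\bq_t)^\top(\bx_{t+1}-\bx_t)$, producing the quantization-error term $\tfrac12\|\bq_t-f'_t(\bx_t)\|^2_{\psi_t^*}$ (a variance, later bounded by the second moment $\|\bq_t\|^2_{\psi_t^*}$ --- note this is also why the paper's constant is $\tfrac12$ rather than your $\tfrac{\eta}{2}$) plus $(\tfrac12+L)B_{\psi_t}(\bx_{t+1},\bx_t)$ via 1-strong convexity of $\psi_t$. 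The choice $\eta=\frac{2}{1+2L}$ makes $\tfrac12+L=\tfrac1\eta$, so this quadratic is cancelled \emph{exactly} by the $-\tfrac1\eta B_{\psi_t}(\bx_{t+1},\bx_t)$ coming out of the three-point identity; no Fenchel--Young step on $\langle\bq_t,\bx_{t+1}-\bx_t\rangle$ is needed, and the loss stays evaluated at $\bx_{t+1}$ throughout. To repair your proof you would have to import this smoothness assumption and learning-rate coupling; without them the statement you can actually prove is the $f_t(\bx_t)$ version, not the one claimed.
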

\begin{proof}
See Appendix for the proof.
\end{proof}

\begin{proposition}
Let the sequence ${\bx_t}$ be defined by the update (\ref{QRDA}), $SynQ_t$ defined by Eq.~(\ref{SynQ_t}), $\bq_t$ defined by Eq.~(\ref{Qt}), $\mathbb{E}[\bq_t]=f_t(\bx_t)$. For $\forall \bg \in \mathbb{R}^d$, let $\psi_t^*(\bg)$ be the conjugate dual of $t\phi(\bx)+\frac{1}{\eta}\psi_t(\bx)$,
$\phi(\bx)=\lambda||\bx||_1$,
$||\cdot||_{\psi_t^*}=\sqrt{<\cdot,\frac{\eta}{2\bH_t}\cdot>}$. $x^*$ is the optimal solution to $f(\bx)$, for any $\bx^*\in\chi$, we have 
\begin{equation*}
    \begin{split}
        &\quad\sum_{t=1}^{T}\mathbb{E}_\bq [f_t(\bx_{t+1})+\phi(\bx_{t+1})-f_t(\bx^*)-\phi(\bx^*)]\\
        &\leq  \frac{1}{\eta}\mathbb{E}_\bq [\psi_T(\bx^*)]+\frac{\eta}{2}\sum_{t=1}^{T}\mathbb{E}_\bq||\bq_t||^2_{\psi^*_{t-1}}\\
    \end{split}
\end{equation*}
\end{proposition}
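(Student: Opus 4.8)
The plan is to reduce the function-value regret on the left-hand side to a purely \emph{linear} regret and then bound the latter with the dual-averaging (conjugate) machinery that the statement sets up through $\psi_t^*$. First I would linearize: writing $\mathbb{E}_\bq[\bq_t]$ for the unbiased quantized gradient (so that the displayed hypothesis reads $\mathbb{E}_\bq[\bq_t]=f'_t(\bx_t)$), convexity of each $f_t$ lets me dominate the gap $f_t(\bx_{t+1})-f_t(\bx^*)$ by a linear term of the form $\langle\bq_t,\,\cdot-\bx^*\rangle$ evaluated at the relevant RDA iterate. Hence it suffices to bound $\sum_{t=1}^T\mathbb{E}_\bq[\langle\bq_t,\bx_t-\bx^*\rangle+\phi(\bx_t)-\phi(\bx^*)]$ by the stated right-hand side; this is where $\phi(\bx)=\lambda\|\bx\|_1$ and the cumulative-gradient structure that RDA is designed for come into play, and where $\mathbb{E}_\bq$ enters.

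Second, I would exploit the conjugate dual. Scaling the update (\ref{RDA2}) by $t/\eta$ shows $\bx_{t+1}=\arg\min_\bx\{\langle\bg_{1:t},\bx\rangle+t\phi(\bx)+\tfrac{1}{\eta}\psi_t(\bx)\}$ with $\bg_{1:t}=\sum_{\tau=1}^t\bq_\tau$, so by definition of $\psi_t^*$ one has the identity $\psi_t^*(-\bg_{1:t})=-[\langle\bg_{1:t},\bx_{t+1}\rangle+t\phi(\bx_{t+1})+\tfrac{1}{\eta}\psi_t(\bx_{t+1})]$ and $\nabla\psi_{t-1}^*(-\bg_{1:t-1})=\bx_t$. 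The engine of the proof is a one-step recursion for $\psi_t^*(-\bg_{1:t})$, built from two facts: (i) monotonicity $\psi_t^*\le\psi_{t-1}^*$ pointwise, which holds because $t\phi(\bx)+\tfrac{1}{\eta}\psi_t(\bx)$ is pointwise nondecreasing in $t$ (as $\phi\ge0$ and $\bH_t\succeq\bH_{t-1}$, since $\bc_{t,d}=\|\bq_{1:t,d}\|_2$ is nondecreasing); and (ii) smoothness of the conjugate, namely that $\tfrac{1}{\eta}\psi_{t-1}$ being $\tfrac{1}{\eta}$-strongly convex in $\|\cdot\|_{\psi_{t-1}}$ makes $\psi_{t-1}^*$ correspondingly smooth in the dual norm $\|\cdot\|_{\psi_{t-1}^*}$. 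Combining (i), (ii) and $\bg_{1:t}=\bg_{1:t-1}+\bq_t$ yields $\psi_t^*(-\bg_{1:t})\le\psi_{t-1}^*(-\bg_{1:t-1})-\langle\bx_t,\bq_t\rangle+\tfrac{\eta}{2}\|\bq_t\|_{\psi_{t-1}^*}^2$.

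Third, I would telescope this recursion from $t=1$ to $T$ with $\psi_0^*(0)=0$, obtaining $\psi_T^*(-\bg_{1:T})\le-\sum_t\langle\bx_t,\bq_t\rangle+\tfrac{\eta}{2}\sum_t\|\bq_t\|_{\psi_{t-1}^*}^2$, and then lower-bound the left side by the very definition of the supremum, $\psi_T^*(-\bg_{1:T})\ge-\langle\bg_{1:T},\bx^*\rangle-T\phi(\bx^*)-\tfrac{1}{\eta}\psi_T(\bx^*)$. Rearranging and using $\bg_{1:T}=\sum_t\bq_t$ together with $T\phi(\bx^*)=\sum_t\phi(\bx^*)$ recovers $\sum_t[\langle\bq_t,\bx_t-\bx^*\rangle+\phi(\bx_t)-\phi(\bx^*)]\le\tfrac{1}{\eta}\psi_T(\bx^*)+\tfrac{\eta}{2}\sum_t\|\bq_t\|_{\psi_{t-1}^*}^2$; combining this with the first-step linearization and taking $\mathbb{E}_\bq$ gives the claim.

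The main obstacle I anticipate is the recursion of the second step, because $\phi(\bx)=\lambda\|\bx\|_1$ is nonsmooth: the clean strong-convexity/smoothness duality applies cleanly to the quadratic $\tfrac{1}{\eta}\psi_{t-1}$, and I must argue that adding the convex, nonnegative $\phi$ both preserves the monotonicity $\psi_t^*\le\psi_{t-1}^*$ and keeps $\psi_{t-1}^*$ differentiable with $\nabla\psi_{t-1}^*(-\bg_{1:t-1})=\bx_t$, so that the second-order term remains exactly $\tfrac{\eta}{2}\|\bq_t\|_{\psi_{t-1}^*}^2$. Tracking the correct dual norm as $\bH_t$ grows with $t$, reconciling the constants with the statement's definition $\|\cdot\|_{\psi_t^*}=\sqrt{\langle\cdot,\tfrac{\eta}{2\bH_t}\cdot\rangle}$, and aligning the played RDA iterate with the point at which $f_t$ is evaluated in the linearization are the delicate accounting; the telescoping and convexity steps themselves are routine.
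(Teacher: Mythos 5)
Your overall route is the same as the paper's: both proofs run on the conjugate dual $\psi_t^*$ of $t\phi+\frac{1}{\eta}\psi_t$, the identity $\nabla\psi_t^*(-\bz_t)=\bx_{t+1}$ (with $\bz_t=\sum_{\tau\le t}\bq_\tau$), $\eta$-smoothness of $\psi_{t-1}^*$ in the dual norm (dual to $\frac{1}{\eta}$-strong convexity of $\frac{1}{\eta}\psi_{t-1}$), monotonicity of the conjugates, telescoping, evaluation of the supremum at $\bx^*$, and finally convexity plus unbiasedness of the quantizer. The difference is mostly organizational (you telescope a recursion for $\psi_t^*(-\bz_t)$, while the paper peels terms off the composite quantity $\sum_t[\bq_t^\top\bx_t+\phi(\bx_{t+1})]+\frac{1}{\eta}\psi_T(\bx^*)+\psi_T^*(-\bz_T)$) — except for one step where your version genuinely loses something.

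The gap: your one-step recursion $\psi_t^*(-\bz_t)\le\psi_{t-1}^*(-\bz_{t-1})-\langle\bx_t,\bq_t\rangle+\frac{\eta}{2}\|\bq_t\|^2_{\psi_{t-1}^*}$ discards the regularizer term, yet step 3 claims the rearranged bound carries $\phi(\bx_t)$ on its left-hand side. Those terms cannot reappear: telescoping your recursion and inserting the sup lower bound at $\bx^*$ only yields $\sum_t[\langle\bq_t,\bx_t-\bx^*\rangle-\phi(\bx^*)]\le\frac{1}{\eta}\psi_T(\bx^*)+\frac{\eta}{2}\sum_t\|\bq_t\|^2_{\psi^*_{t-1}}$, and since the proposition's left-hand side includes $+\phi(\bx_{t+1})\ge 0$, this weaker inequality does not imply the claim. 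The fix is the refined monotonicity step the paper actually uses: $\psi_t^*$ and $\psi_{t-1}^*$ are conjugates of functions that differ by one extra copy of $\phi$ (plus the increase of $\psi$), so evaluating $\psi_t^*(-\bz_t)$ at its maximizer $\bx_{t+1}$, splitting $t\phi=(t-1)\phi+\phi$, and applying $\psi_t\ge\psi_{t-1}$ before re-taking the supremum gives $\psi_t^*(-\bz_t)\le\psi_{t-1}^*(-\bz_t)-\phi(\bx_{t+1})$. Carrying this $-\phi(\bx_{t+1})$ through the smoothness step and the telescoping produces exactly the $\sum_t\phi(\bx_{t+1})$ needed on the left (note the mechanism yields $\phi(\bx_{t+1})$, not $\phi(\bx_t)$, matching the statement). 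With that correction, your argument coincides with the paper's.
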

\begin{proof}
See Appendix for the proof.
\end{proof}
 \textbf{Remark:}   Since $\mathbb{E}_\bq||\bq_t-f'_t(\bx_t)||^2_{\psi^*_t}$ is the quantization variance scaled by the adaptive learning rate and 
 \begin{equation*}
    \begin{split}
\mathbb{E}_\bq||\bq_t-f'_t(\bx_t)||^2_{\psi^*_t}&\leq \mathbb{E}_\bq||\bq_t||^2_{\psi^*_t}\\
&\leq \mathbb{E}_\bq||\bq_t||^2_{\psi^*_{t-1}}
    \end{split}
\end{equation*}
We can simply regard $\mathbb{E}_\bq||\bq_t||^2_{\psi^*_t} $ as the  error introduced by quantization for QCMD adagrad and regard $\mathbb{E}_\bq||\bq_t||^2_{\psi^*_{t-1}} $  as the  error introduced by quantization for QRDA adagrad.  Therefore, Proposition 1. and Proposition 2. show that gradient quantization introduces additional noise which affects  the  model  convergence  and  sparsity. 

\subsection{Threshold Quantization}
Although gradient quantization can reduce the cost of gradient communication in distributed training, it also introduces additional errors, which affect the convergence of the model and the sparsity of parameters in QCMD adagrad and QRDA adagrad.
 As an unbiased gradient quantization method,  TernGrad \cite{wen2017terngrad} has already made a good balance between the encoding cost and accuracy of the general model. However, when it comes to the sparse model, a large quantization error still leads to slower convergence of the $l_1$ norm as a part of the objective function, which affects the sparsity of the model. In order to mitigate this problem, we apply the threshold quantization method to the QCMD adagrad and QRDA adagrad.

\textbf{Threshold quantization} is an existing quantization method used for model quantization in \cite{TWN}. We apply it to gradient quantization since it produces less error than Terngrad\cite{wen2017terngrad}.
In this section, we use $\bv^t$ to represent the (stochastic) gradient in the $t_{th}$ iteration.
Fig.~\ref{thresh_Q} gives a brief explanation of threshold quantization, and more analysis is provided below. 
\begin{figure}[h]
    \centering
    \includegraphics[width=0.7\linewidth]{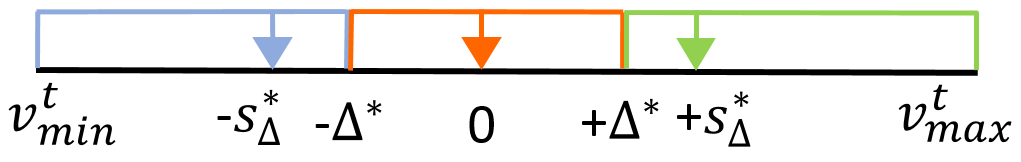}
    \caption{ An illustration of threshold quantization. Suppose $v^t_{min}\leq v^t_i\leq v^t_{max}$. $\triangle^*$ denotes the optimal threshold. For $v^t_i$ within the orange line, $v^t_i$ is quantized to 0. For $v^t_i$ within the blue line, $v^t_i$ is quantized to $-s_\triangle^*$. For $v^t_i$ within the green line, $v^t_i$ is quantized to $s_\triangle^*$.}
    \label{thresh_Q}
\end{figure}

$Q_\triangle(\cdot)$ is the threshold quantization function defined as below
\begin{equation}
Q_{\triangle}(\bv^t)=s\bv,
\end{equation}
where $\bv$ is a ternary vector, $s$ is a non-negative scaling factor and $\triangle$ denotes the threshold.
For the $i^{th}$ component of $\bv$,
\begin{equation}
\bv_i=\left\{
\begin{array}{rcl}
+1,& &{\text{if } \bv_i^t>\Delta};\\
0,& &{\text{if } |\bv_i^t|\leq\Delta};\\
-1,& &{\text{if } \bv_i^t<-\Delta}.\\
\end{array}
\right.
\end{equation}
The error 
${\epsilon}_t$ is defined as the difference between the full precision vector $\bv^t$ and $Q_\triangle(\bv^t)$:
\begin{equation}
\begin{split}
{\epsilon}_t=\bv^t-Q_\triangle(\bv^t).
\end{split}    
\end{equation}
In order to keep as more information of $\bv^t$ as possible,
the quantization method is required to minimize the 
Euclidean distance between $\bv^t$ and $Q_\triangle(\bv^t)$, i.e.,
\begin{equation}
\left\{
\begin{array}{lr}
s^*,\bv^*=\arg min_{s,\bv}||{\epsilon_t}||_2^2\\
s.t.\quad s\geq 0, \bv_i\in\{-1,0,1\}, i=1,2,...,n.
\end{array}
\right.
\label{obj_func}
\end{equation}
Obviously, the above problem can be transformed to the following formulation:
\begin{equation}
s^*,\Delta^*=\arg min_{s\geq 0,\Delta>0}|I_\Delta|s^2-2s\sum_{i\in I_\Delta}|\bv^t_i|+\sum_{i=1}^{n}(\bv^t_i)^2,
\end{equation}
where $I_\Delta=\{i||\bv^t_i|>\Delta\}$ and $|I_{\Delta}|$ denotes the number of elements in $I_\Delta$. Thus, for any given $\Delta$, the optimal $s$ can be computed as follows,
\begin{equation}
s^*_\Delta=\frac{1}{|I_\Delta|}\sum_{i\in I_\Delta}|\bv^t_i|.
\end{equation}
The optimal $\Delta$ can be computed as follows,
\begin{equation}
\Delta^*=\arg max_{\Delta>0}\frac{1}{|I_\Delta|}(\sum_{i\in I_\Delta}|\bv^t_i|)^2.
\label{objective_value}
\end{equation}

To solve the above optimal threshold, we can rank the components of the gradient and treat them as potential thresholds.
For each potential threshold, we calculate the corresponding objective value $\frac{1}{|I_\Delta|}(\sum_{i\in I_\Delta}|\bv^t_i|)^2$  
and take the potential threshold that maximizes the objective value as the optimal threshold.
The computational complexity of this process is $O(d \log d)$ for $d$-dimensional gradients, where the major cost is the sorting over all elements of gradients. 

\subsection{Threshold Approximation}\label{Threshold_approximation}
We try to improve the computational efficiency of the threshold quantization procedure without harming the optimality of the coding described in Eq.~(\ref{objective_value}). This improvement begins with the assumption that the gradient follows the Gaussian distribution.
In case of $\bv^t_i$ follows 
$N(0,\sigma^2)$, 
Li and Liu \cite{TWN} have given an approximate solution for the optimal threshold $\triangle^*$ by $0.6\sigma$ which equals $0.75\cdot\mathbb{E}(|\bv^t_i|)\approx\frac{0.75}{d}\sum_{i=1}^{d}|\bv_i^t|$. We also find in our experiment that most of the gradients satisfy the previous assumption. Fig.~\ref{histogram} shows the experimental result for training
AlexNet \cite{krizhevsky2012imagenet} on two workers.
The left column visualizes the first convolutional layer and the right one visualizes the first fully-connected layer. The distribution of the original floating gradients is close to the Gaussian distribution for both convolutional and fully-connected layers. Based on this observation, we simply use $\frac{0.75}{d}\sum_{i=1}^{d}|\bv_i^t|$ to approximate the optimal threshold to avoid the expensive cost of solving the optimal threshold $\Delta^*$ every iteration.

\textbf{Encode}. The gradient vectors need to be encoded after the threshold quantization. Specifically, we use 2 bits to encode $\bv_i$, and one floating-point number to represent the scaling factor s. For the dense model, the overall communication cost is $(32+2d)$ bits, where $d$ denotes the parameter dimension. The communication cost of threshold quantization is the same as Terngrad. But for the sparse model, assume the number of non-zero parameters is $k$ $(k\ll d)$, the communication cost of QCMD adagrad and QRDA adagrad is $(32+d+2k)$ bits. Since we need at least $d$ bits to indicate which components of the parameter are non-zero.
\begin{figure}[h] 
\centering 
\subfigure[First convolutional layer]{ \begin{minipage}[b]{0.45\linewidth} \includegraphics[width=1\linewidth]{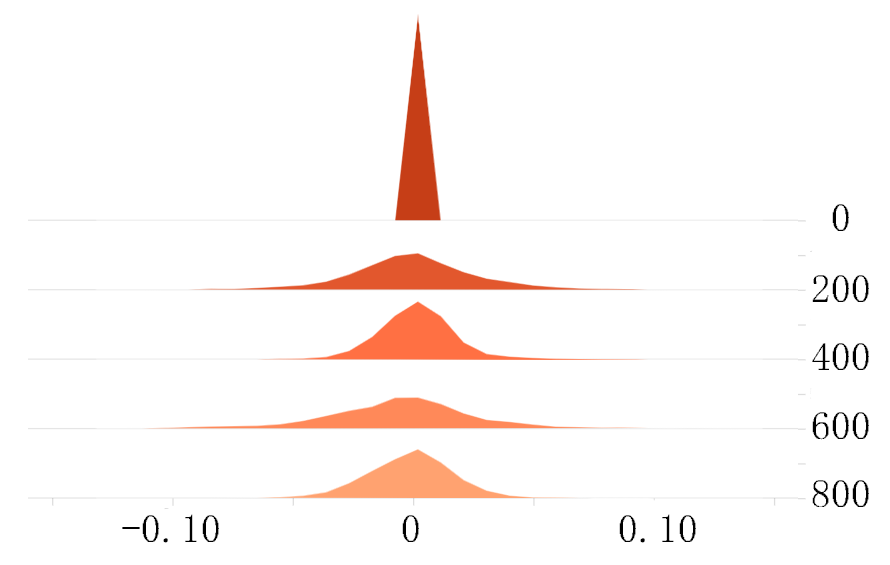}\vspace{4pt}\end{minipage}} 
\subfigure[First fully connected layer]{ 
\begin{minipage}[b]{0.45\linewidth} \includegraphics[width=1\linewidth]{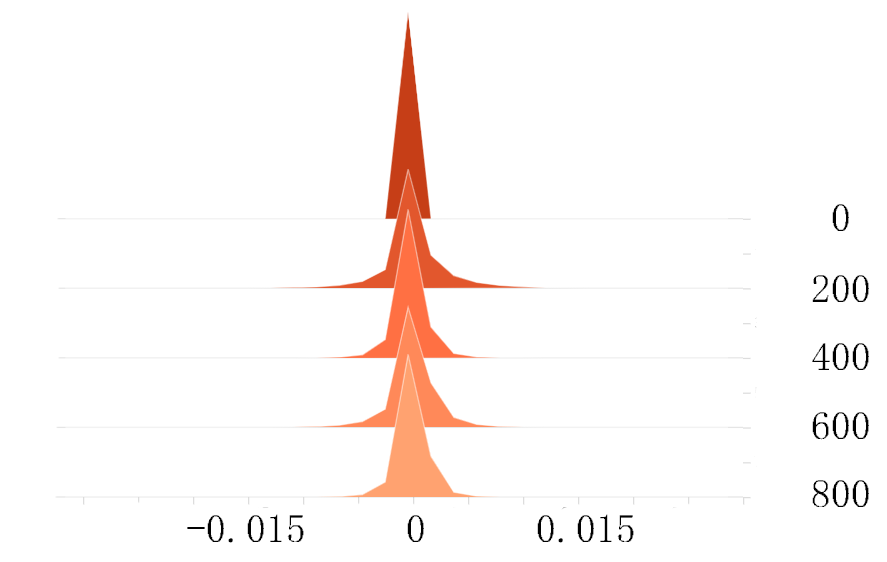}\vspace{4pt} \end{minipage}} 
\caption{Histograms of original floating gradients. 
These histograms are obtained from distributed training AlexNet on two workers, where the vertical axis is the training iteration. The left column visualizes the first convolutional layer and the right one visualizes the first fully-connected layer.} 
\label{histogram}
\end{figure}

\begin{figure*}[!htbp]
\centering 
    \includegraphics[width=\linewidth]{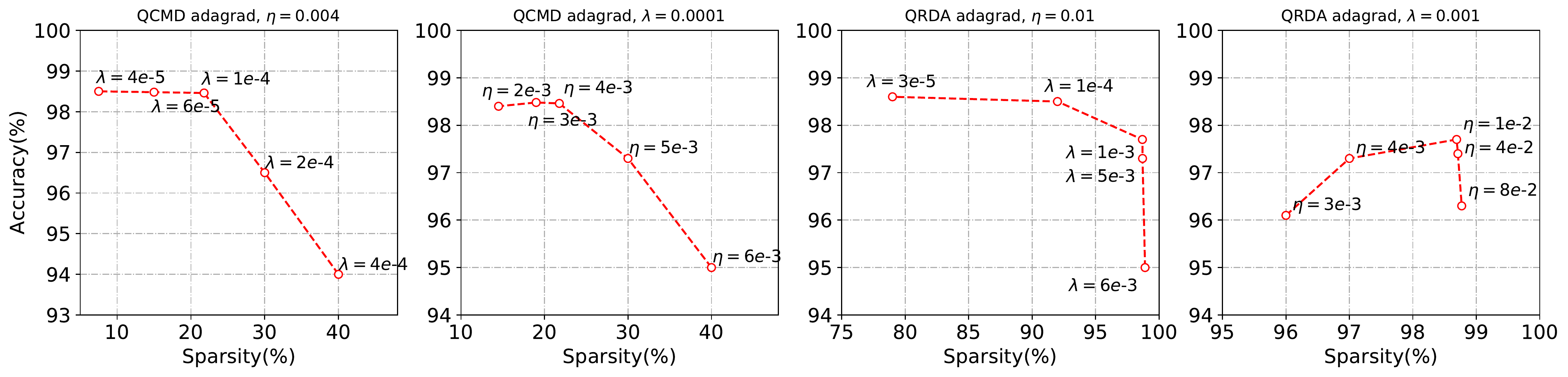}
\caption{The effect of coefficient of regularzation $\lambda$ and learning rate $\eta$ on model sparsity respectively on MNIST for QCMD adagrad and QRDA adagrad.}
\label{rw_lr_on_spr}
\end{figure*}

\section{Convergence Analysis\label{sct_con_rate}}
Two aspects are taken into account to evaluate the distributed optimization algorithm, the number of bits sent and received by the workers (communication complexity) and the number of parallel iterations required for convergence
(round complexity). In this section, we theoretically analyze the proposed QCMD adagrad and QRDA adagrad in terms of the convergence rate of regret.


To obtain the regret bound for QCMD adagrad and QRDA adagrad, we provide the follow lemma, which has been proved in Lemma 4 of \cite{duchi2011adaptive}.
\begin{lemma} For any $\delta \geq 0$, the Mahalanobis norm $||\cdot||_{\psi_t}=\sqrt{<\cdot,\bH_t\cdot>}$ and $||\cdot||_{\psi^*_t}=\sqrt{<\cdot,\frac{1}{\bH_t}\cdot>}$ be the associated dual norm, we have
 \begin{equation*}
    \begin{split}
\frac{1}{2}\sum_{t=1}^T||\bq_t||^2_{\psi^*_t}\leq \sum_{i=1}^d ||\bq_{1:T,i}||_2
    \end{split}
\end{equation*}
\end{lemma}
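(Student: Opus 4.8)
The plan is to exploit the diagonal structure of $\bH_t$ to reduce the claimed norm inequality to a sum of independent one-dimensional inequalities, one per coordinate. First I would write the dual norm out explicitly: since $\bH_t=\delta\bI+diag(\bc_t)$ is diagonal with entries $\bH_{t,ii}=\delta+||\bq_{1:t,i}||_2$, the squared dual norm splits coordinatewise as
$$||\bq_t||^2_{\psi^*_t}=\langle\bq_t,\bH_t^{-1}\bq_t\rangle=\sum_{i=1}^d\frac{\bq_{t,i}^2}{\delta+||\bq_{1:t,i}||_2}.$$
Because $\delta\geq 0$ can only enlarge each denominator, I would bound this from above by $\sum_{i=1}^d \bq_{t,i}^2/||\bq_{1:t,i}||_2$ and then interchange the two summations, so that the whole statement follows once I establish, for each fixed coordinate $i\in[d]$,
$$\frac{1}{2}\sum_{t=1}^T\frac{\bq_{t,i}^2}{||\bq_{1:t,i}||_2}\leq ||\bq_{1:T,i}||_2.$$

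The core of the argument is this scalar inequality. Abbreviating $s_t:=||\bq_{1:t,i}||_2$ for the running norm along coordinate $i$, with $s_0:=0$, the key identity is $\bq_{t,i}^2=s_t^2-s_{t-1}^2$, which follows directly from $s_t^2=\sum_{\tau=1}^t \bq_{\tau,i}^2$. The main step is then the elementary estimate
$$\frac{\bq_{t,i}^2}{s_t}=\frac{s_t^2-s_{t-1}^2}{s_t}=\frac{(s_t-s_{t-1})(s_t+s_{t-1})}{s_t}\leq 2(s_t-s_{t-1}),$$
which uses nothing more than the monotonicity $s_{t-1}\leq s_t$ to replace $s_t+s_{t-1}$ by $2s_t$. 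Summing over $t=1,\dots,T$ telescopes to $\sum_{t=1}^T \bq_{t,i}^2/s_t\leq 2(s_T-s_0)=2s_T$, which is exactly the per-coordinate claim after dividing by two; summing over $i$ then yields the lemma.

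I do not anticipate a serious obstacle, as this is essentially Lemma 4 of \cite{duchi2011adaptive}; the only points that need care are bookkeeping ones. When $\delta=0$ one must adopt the convention that a term with $\bq_{t,i}=0$ (hence $s_t=0$) contributes zero, so that the division is well defined, equivalently restricting each coordinate sum to indices $t$ with $s_t>0$. A self-contained alternative to the telescoping bound is induction on $T$, using the concavity estimate $\sqrt{s_T^2-\bq_{T,i}^2}\leq s_T-\bq_{T,i}^2/(2s_T)$ in the inductive step; I would favor the telescoping form, however, since it is shorter and avoids the extra case analysis.
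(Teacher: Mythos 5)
Your proof is correct. The paper does not actually prove this lemma itself---it defers to Lemma 4 of \cite{duchi2011adaptive}---and your argument is essentially the standard proof of that cited result: the coordinatewise reduction via the diagonal structure of $\bH_t$, the dropping of $\delta \geq 0$ from the denominators, and the scalar bound $\bq_{t,i}^2/s_t \leq 2(s_t - s_{t-1})$ are exactly the ingredients of Duchi et al.'s induction, which, as you yourself note, rests on the same elementary inequality as your telescoping sum.
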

\begin{lemma} For any $\delta \geq \max_t||\bq_t||_{\infty}$, $||\cdot||_{\psi^*_t}=\sqrt{<\cdot,\frac{\eta}{2\bH_t}\cdot>}$, we have
 \begin{equation*}
    \begin{split}
\frac{1}{2}\sum_{t=1}^T||\bq_t||^2_{\psi^*_{t-1}}\leq \sum_{i=1}^d ||\bq_{1:T,i}||_2
    \end{split}
\end{equation*}
\end{lemma}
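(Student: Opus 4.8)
The plan is to reduce Lemma 2 to the already-established Lemma 1 by bounding the spectral mismatch between the two learning-rate matrices $\bH_{t-1}$ and $\bH_t$. Unfolding the dual norm in the statement, the left-hand side is $\frac{1}{2}\sum_{t=1}^T \frac{\eta}{2}\langle \bq_t, \bH_{t-1}^{-1}\bq_t\rangle$, whereas Lemma 1 controls the structurally identical sum but with $\bH_t^{-1}$ in place of $\bH_{t-1}^{-1}$. Since $\bH_t \succeq \bH_{t-1}$ (each diagonal entry $\delta+\|\bq_{1:t,i}\|_2$ is nondecreasing in $t$), we have $\bH_{t-1}^{-1}\succeq \bH_t^{-1}$, so the RDA-type sum is the larger one; a naive appeal to Lemma 1 therefore gives an inequality in the wrong direction. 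The real work is to show the two quadratic forms differ only by a constant factor, and this is precisely where the hypothesis $\delta \geq \max_t\|\bq_t\|_\infty$ is used.

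First I would exploit the diagonal structure and argue coordinate-wise over $i\in[d]$. The key estimate is $\bH_{t,ii}\le 2\,\bH_{t-1,ii}$ for every $t$ and $i$. To obtain it I would apply the triangle inequality to the concatenated vector, $\|\bq_{1:t,i}\|_2 = \sqrt{\|\bq_{1:t-1,i}\|_2^2+\bq_{t,i}^2}\le \|\bq_{1:t-1,i}\|_2 + |\bq_{t,i}|$, and then bound $|\bq_{t,i}|\le \|\bq_t\|_\infty \le \delta$. Adding $\delta$ to both sides gives $\delta + \|\bq_{1:t,i}\|_2 \le 2\delta + \|\bq_{1:t-1,i}\|_2 \le 2(\delta+\|\bq_{1:t-1,i}\|_2)$, i.e. $\bH_{t,ii}\le 2\,\bH_{t-1,ii}$, and hence $\bH_{t-1,ii}^{-1}\le 2\,\bH_{t,ii}^{-1}$.

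Next I would push this through the quadratic form and sum over $t$: coordinate-wise $\bq_{t,i}^2\,\bH_{t-1,ii}^{-1}\le 2\,\bq_{t,i}^2\,\bH_{t,ii}^{-1}$, so $\langle\bq_t,\bH_{t-1}^{-1}\bq_t\rangle \le 2\langle\bq_t,\bH_t^{-1}\bq_t\rangle$. Summing and invoking Lemma 1, which bounds $\sum_{t=1}^T \langle \bq_t,\bH_t^{-1}\bq_t\rangle$ by $2\sum_{i=1}^d\|\bq_{1:T,i}\|_2$, collapses the $\bH_{t-1}$ sum onto the target right-hand side. The leading $\frac12$ and the scalar $\frac{\eta}{2}$ carried by the RDA normalization of $\|\cdot\|_{\psi^*_t}$ are tracked alongside the factor-$2$ comparison so as to land on the stated inequality; the only genuinely analytic ingredient is the spectral bound of the previous paragraph.

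The main obstacle is exactly the index shift from $t$ to $t-1$. Without the assumption $\delta \geq \max_t\|\bq_t\|_\infty$, the ratio $\bH_{t,ii}/\bH_{t-1,ii}$ can be arbitrarily large (for instance when $\bq_{1:t-1,i}=0$ but $\bq_{t,i}$ is large, the ratio behaves like $1+|\bq_{t,i}|/\delta$), and no constant-factor comparison with Lemma 1 would survive. Verifying that $\delta\ge\|\bq_t\|_\infty$ is precisely the condition that caps this ratio at $2$ is the crux of the proof; once the factor-$2$ spectral comparison is secured, the remainder is routine reduction to Lemma 1.
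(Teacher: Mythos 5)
Your proof mechanism is sound and, at its crux, the same as the paper's: both arguments neutralize the index shift $t-1\to t$ via the triangle inequality $||\bq_{1:t,i}||_2\le||\bq_{1:t-1,i}||_2+|\bq_{t,i}|$ combined with $|\bq_{t,i}|\le||\bq_t||_\infty\le\delta$, and then fall back on the Adagrad sum bound of Lemma 1. The difference is the comparison matrix, and it has a quantitative cost. The paper (whose proof of this lemma is embedded in the proof of Theorem 2 rather than stated separately) uses $\bH_{t-1,ii}=\delta+||\bq_{1:t-1,i}||_2\ge||\bq_{1:t,i}||_2=\bc_{t,i}$, i.e.\ it dominates $\bH_{t-1}$ from below by the bare matrix $diag(\bc_t)$ with no constant loss, and then applies the $\delta=0$ form of the sum bound, $\sum_{t=1}^T\bq_t^\top diag(\bc_t)^{-1}\bq_t\le 2\sum_{i=1}^d||\bq_{1:T,i}||_2$. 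You instead dominate $\bH_{t-1}$ by $\frac{1}{2}\bH_t$ so that Lemma 1 can be cited verbatim (same $\delta$), which costs a factor of $2$: your chain gives
\begin{equation*}
\frac{1}{2}\sum_{t=1}^T||\bq_t||^2_{\psi^*_{t-1}}\le\frac{\eta}{4}\cdot 2\sum_{t=1}^T\bq_t^\top\bH_t^{-1}\bq_t\le\frac{\eta}{4}\cdot 2\cdot 2\sum_{i=1}^d||\bq_{1:T,i}||_2=\eta\sum_{i=1}^d||\bq_{1:T,i}||_2,
\end{equation*}
which is twice the bound $\frac{\eta}{2}\sum_{i=1}^d||\bq_{1:T,i}||_2$ obtained by the paper's route. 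So your closing claim that the constants ``land on the stated inequality'' is not quite accurate---you establish the lemma only up to an extra factor of $2$. (To be fair, both derivations carry a stray factor of $\eta$ relative to the literal statement: the paper puts $\frac{\eta}{2}$ inside the dual norm yet keeps the constant-free right-hand side of Lemma 4 in \cite{duchi2011adaptive}, so the statement as written is exact only for suitably small $\eta$; that slack is the paper's, not yours.) The trade-off between the two routes is clear: yours is a clean black-box reduction to Lemma 1 as stated, while the paper's requires the mild extra observation that the sum bound also holds with $diag(\bc_t)$ in place of $\bH_t$, in exchange for which it is lossless in constants.
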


Combining the above arguments with Proposition.1 and  Proposition.2, we have the following theorem.

\begin{theorem}
For QCMD adagrad, let $D_\infty=\max_{t\leq T}||\bx^*-\bx_t||_\infty$, $G_\infty=\max_{t\leq T,i\leq d}||\bq_{1:T,i}||_2$, 
the regularizer $\phi(\bx)=\lambda||\bx||_1$, where $\lambda\geq 0$. Assume $Q(\cdot)$ is an unbiased quantization function,  $f_t(\bx)$ is L-smooth function, learning rate $\eta=\frac{2}{1+2L}$,  $\psi_t(x)$ is 1-strongly convex function, the regret is as below
\begin{equation*}
\begin{split}
&\quad\frac{1}{T}
\sum_{t=1}^{T}\mathbb{E}_\bq [f_t(\bx_{t+1})+\phi(\bx_{t+1})-f_t(\bx^*)-\phi(\bx^*)]\\
&\leq \frac{dG_\infty}{\sqrt{T}}+\frac{dG_\infty D_\infty}{2\eta\sqrt{T}}
\end{split}
\end{equation*}
\end{theorem}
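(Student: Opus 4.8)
The plan is to start from the bound already proved in Proposition 1 and control its three right-hand terms separately, then divide through by $T$. The quadratic term $\frac12\sum_{t=1}^{T}\mathbb{E}_\bq\|\bq_t\|^2_{\psi_t^*}$ is handled immediately by Lemma 1, which gives $\frac12\sum_{t=1}^{T}\|\bq_t\|^2_{\psi_t^*}\le\sum_{i=1}^{d}\|\bq_{1:T,i}\|_2$. To surface the $\sqrt{T}$ dependence I would bound each cumulative column norm by $\|\bq_{1:T,i}\|_2=\sqrt{\sum_{t=1}^{T}\bq_{t,i}^2}\le\sqrt{T}\,G_\infty$, reading $G_\infty$ as the per-coordinate magnitude controlling every quantized gradient; summing over the $d$ coordinates bounds this term by $d\sqrt{T}\,G_\infty$.

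The second and more delicate part is to telescope the Bregman terms $\frac1\eta B_{\psi_1}(\bx^*,\bx_1)+\frac1\eta\sum_{t=1}^{T-1}\big[B_{\psi_{t+1}}(\bx^*,\bx_{t+1})-B_{\psi_t}(\bx^*,\bx_{t+1})\big]$. The structural fact I would exploit is that $\bH_t=\delta\bI+\mathrm{diag}(\bc_t)$ is diagonal with entries $\bc_{t,i}=\|\bq_{1:t,i}\|_2$ that are nondecreasing in $t$, so $\bH_{t+1}-\bH_t\succeq 0$. Writing each difference as $\frac12(\bx^*-\bx_{t+1})^\top(\bH_{t+1}-\bH_t)(\bx^*-\bx_{t+1})$ and bounding the (nonnegative) quadratic form coordinatewise by $D_\infty^2\,\mathrm{tr}(\bH_{t+1}-\bH_t)$, the sum telescopes to $\frac12 D_\infty^2\,\mathrm{tr}(\bH_T)$. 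Since $\mathrm{tr}(\bH_T)=\sum_{i=1}^{d}(\delta+\|\bq_{1:T,i}\|_2)\le d\sqrt{T}\,G_\infty$ once the small $\delta$ is absorbed, this contributes $\frac{1}{2\eta}D_\infty^2\,d\sqrt{T}\,G_\infty$.

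Adding the two contributions and dividing by $T$ then produces $\frac{dG_\infty}{\sqrt{T}}+\frac{dG_\infty D_\infty^2}{2\eta\sqrt{T}}$, matching the claimed rate. The $L$-smoothness hypothesis and the particular choice $\eta=\frac{2}{1+2L}$ are inherited from Proposition 1: because $\mathbb{E}_\bq[\bq_t]=f_t'(\bx_t)$ is unbiased, smoothness is what lets one relate the gradient evaluated at $\bx_t$ to the objective evaluated at the next iterate $\bx_{t+1}$ on the left-hand side, and $\eta$ is tuned so the coefficient multiplying the regret remains nonnegative after that self-bounding step.

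I expect the main obstacle to be the telescoping step for the adaptive matrices. One must verify the monotonicity $\bH_{t+1}\succeq\bH_t$ carefully and justify the coordinatewise bound by $D_\infty^2$ even though the Bregman centers $\bx_{t+1}$ vary with $t$; this is precisely where the diagonal, cumulative construction of $\bH_t$ and the $1$-strong convexity of $\psi_t$ are essential, since they guarantee that $B_{\psi_t}$ is exactly the Mahalanobis quadratic whose increments can be collapsed into $\mathrm{tr}(\bH_T)$.
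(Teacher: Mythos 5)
Your proposal follows essentially the same route as the paper's own proof: start from Proposition 1, bound the quadratic term via Lemma 1 together with $\|\bq_{1:T,i}\|_2\le\sqrt{T}\,G_\infty$, telescope the Bregman differences using the diagonal monotone structure of $\bH_t$ and the $\|\cdot\|_\infty$-trace bound (folding $B_{\psi_1}$ into the telescope), and divide by $T$. The only point to note is that your derivation produces $D_\infty^2$ in the second term, exactly as the paper's own intermediate inequality $\max_{t\le T}\|\bx^*-\bx_t\|_\infty^2$ does, whereas the theorem statement writes $D_\infty$; this mismatch is a slip in the paper itself, not a defect of your argument.
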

\begin{proof}
See Appendix for the proof.
\end{proof}
\begin{theorem}
For QRDA adagrad, let $D_\infty=\max_{t\leq T}||\bx^*-\bx_t||_\infty$, $G_\infty=\max_{t\leq T,i\leq d}||\bq_{1:T,i}||_2$, $Q(\cdot)$ be an unbiased quantization function, the regularizer $\phi(\bx)=\lambda||\bx||_1$, where $\lambda\geq 0$. Then the regret is as below
\begin{equation*}
    \begin{split}
        &\quad\frac{1}{T}\sum_{t=1}^T\mathbb{E}_\bq[f_t(\bx_t)+\phi(\bx_t)-f_t(\bx^*)-\phi(\bx^*)]\\
        &\leq \frac{\delta||\bx^*||_2^2}{\eta T}+\frac{(\frac{1}{\eta}||\bx^*||^2_\infty+\frac{\eta^2}{2}) dG_\infty }{\sqrt{T}}
    \end{split}
\end{equation*}
\end{theorem}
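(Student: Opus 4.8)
The plan is to obtain Theorem~2 as a direct consequence of Proposition~2 and Lemma~2, since Proposition~2 already reduces the QRDA regret to just two quantities: an initialization/divergence term $\frac{1}{\eta}\mathbb{E}_\bq[\psi_T(\bx^*)]$ and a cumulative gradient-noise term $\frac{\eta}{2}\sum_{t=1}^T\mathbb{E}_\bq\|\bq_t\|^2_{\psi^*_{t-1}}$, and Lemma~2 is tailored precisely to control the second of these. I would therefore start by writing down the Proposition~2 bound, then bound each of the two right-hand terms separately, sum them, and finally divide by $T$ to read off the averaged regret.

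For the first term I would expand $\psi_T(\bx^*)=\frac{1}{2}\bx^{*\top}\bH_T\bx^*$ using $\bH_T=\delta\bI+\mathrm{diag}(\bc_T)$ with $\bc_{T,i}=\|\bq_{1:T,i}\|_2$. This splits into a $\delta$-part proportional to $\|\bx^*\|_2^2$, which after division by $\eta T$ produces the $O(1/T)$ term $\frac{\delta\|\bx^*\|_2^2}{\eta T}$, and a curvature part $\frac{1}{2}\sum_i\|\bq_{1:T,i}\|_2(\bx^*_i)^2$, which I would bound by $\frac{1}{2}\|\bx^*\|_\infty^2\sum_i\|\bq_{1:T,i}\|_2\le\frac{1}{2}\|\bx^*\|_\infty^2\,dG_\infty$ using the definition $G_\infty=\max_i\|\bq_{1:T,i}\|_2$. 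For the second term I would invoke Lemma~2 to replace $\frac{1}{2}\sum_t\|\bq_t\|^2_{\psi^*_{t-1}}$ by $\sum_i\|\bq_{1:T,i}\|_2\le dG_\infty$; after restoring the $\eta$-factors carried by the dual norm this yields the $\frac{\eta^2}{2}dG_\infty$ contribution in the statement. Collecting the two pieces and dividing by $T$ gives the claimed bound, with the leading rate $O(1/\sqrt{T})$ emerging because for bounded quantized gradients $G_\infty=O(\sqrt{T})$, so that $\frac{dG_\infty}{T}=O(d/\sqrt{T})$.

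I expect two points to require care. First, Proposition~2 is stated for the surrogate regret evaluated at $\bx_{t+1}$, whereas the theorem measures regret at $\bx_t$; reconciling the two needs a reindexing of the dual-averaging iterates together with the unbiasedness $\mathbb{E}[\bq_t]=f'_t(\bx_t)$ and convexity of $f_t$, so that the quantized-gradient bound of Proposition~2 indeed dominates the true expected regret $\mathbb{E}_\bq[f_t(\bx_t)-f_t(\bx^*)]$ up to boundary terms of lower order. Second, and this is the real obstacle, the noise term carries the \emph{shifted} dual norm $\psi^*_{t-1}$ rather than $\psi^*_t$, because in dual averaging the learning-rate matrix $\bH_{t-1}$ that produces $\bx_t$ lags the gradient $\bq_t$. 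Bounding $\|\bq_t\|^2_{\psi^*_{t-1}}$ by a telescoping sum of per-coordinate norms is exactly what forces the hypothesis $\delta\ge\max_t\|\bq_t\|_\infty$ in Lemma~2, which makes $\bH_{t-1}$ and $\bH_t$ comparable; verifying that this hypothesis holds for the (bounded) threshold-quantized gradients is the delicate step. Everything else reduces to the summation identity $\sum_i\|\bq_{1:T,i}\|_2\le dG_\infty$ and routine algebra.
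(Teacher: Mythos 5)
Your proposal follows essentially the same route as the paper's own proof: start from Proposition~2 (more precisely, the intermediate bound $\sum_t[\bq_t^\top(\bx_t-\bx^*)+\phi(\bx_{t+1})-\phi(\bx^*)]\le\frac{1}{\eta}\psi_T(\bx^*)+\frac{\eta}{2}\sum_t\|\bq_t\|^2_{\psi^*_{t-1}}$ in its proof), expand $\psi_T(\bx^*)$ into the $\delta\|\bx^*\|_2^2$ part and the $\|\bx^*\|_\infty^2\sum_i\|\bq_{1:T,i}\|_2$ part, control the noise term by Lemma~2 (which the paper re-derives inline under $\delta\ge\max_t\|\bq_t\|_\infty$), pass from the linearized quantized regret to $\mathbb{E}_\bq[f_t(\bx_t)+\phi(\bx_t)]$ via unbiasedness and convexity, and divide by $T$. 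Both delicate points you flag (the $\bx_t$ versus $\bx_{t+1}$ reindexing and the lagged dual norm $\psi^*_{t-1}$) are exactly the ones the paper handles, and your bookkeeping of where the $\sqrt{T}$ enters through $G_\infty$ is if anything more careful than the paper's.
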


\begin{proof}
See Appendix for the proof.
\end{proof}

{\bf Remark:}
The convergence rate of QCMD adagrad and QRDA adagrad is comparable with CMD adagrad and RDA adagrad respectively, in terms of $O(\frac{1}{\sqrt{T}})$. The quantization error affects the convergence. Only the quantization error is sufficiently small, the proposed methods behave similarly with CMD adagrad and RDA adagrad. The convergence of $f(\bx)$ determines the accuracy of models and  $\phi(\bx)$ determines the sparsity of models. In QCMD adagrad, the learning rate $\eta$ and the coefficient of regularization $\lambda$ control the sparsity of models, while QRDA adagrad controls the sparsity of models only through $\lambda$.

\section{Experiments\label{sct_exp}}
\subsection{Experimental Settings}



\begin{table}[ht]
    \caption{Data statistics}  
    \centering
    \begin{tabular}{cccc}
  \hline
        \multirow{2}*{Data Set}& Number of &Number of &Dimension of\\ 
    ~ & Training Data & Test Data & Input Feature \\ 
  \hline
  news20&15,994&4,000&1,355,191\\
        rcv1&20,242&677,399&47,236\\ 
        MNIST&60,000&10,000&1*28*28\\
        CIFAR-10&50,000&10,000&3*32*32\\ 
        \hline 
        
    \end{tabular}\vspace{-0.1in}
    \label{dataset}
\end{table}

In this section, we first conduct experiments on linear models to validate the effectiveness and efficiency of the proposed QCMD adagrad and QRDA adagrad for binary classification problems. After that, we use our proposed method to train convolutional neural networks, in order to validate the performance of them on non-convex problems. 

\textbf{Baselines}. We compare QCMD adagrad and QRDA adagrad with several sparse model distributed optimization methods, including 32 bits Prox-gd\cite{duchi2010composite}, 32 bits CMD adagrad\cite{duchi2011adaptive}, 32 bits RDA adagrad\cite{duchi2011adaptive} and their corresponding ternary variants\cite{wen2017terngrad}. ${\dagger}$ is to mark the methods that only  the local gradients are quantized but the synchronous gradient.

\textbf{Implementation Details}. 
All the experiments are carried out in a distributed framework that the network bandwidth is 100MBps. For the linear models training, each worker only utilizes the CPUs. For the training of convolutional neural networks, each worker is allocated with 1 NVIDIA Tesla P40 GPU. The methods are evaluated on four publicly available datasets. \textbf{news20} and \textbf{rcv1} are text datasets with a high dimension of input features from LIBSVM \cite{chang2001libsvm}. \textbf{MNIST} \cite{lecun1998gradient} is for handwritten digits classification problem 
and \textbf{CIFAR10} \cite{krizhevsky2009learning} is for image classification problem.
Table \ref{dataset} shows the details of datasets. For \textbf{news20} and \textbf{rcv1}, the $\ell_1$ norm regularized logistic regression model are trained. As for multi-classification problems, we train LeNet for \textbf{MNIST}\cite{lecun1998gradient} and AlexNet \cite{krizhevsky2012imagenet} for \textbf{CIFAR10}. To generate a network with large sparsity, the batch normalization layer is added before each convolution layer and fully connected layer.  The code is implemented via tensorflow. Experimental results are averaged over 5 runs with a random initialization seed.

\begin{table}[!htbp]
    \caption{Settings of hyperparameters}
    \centering
     \begin{small}
    \begin{tabular}{ccccc}
        \hline 
        \multicolumn{5}{c}{\textbf{news20}}\\
        \hline
        \multirow{2}*{Method}&Learning&Coefficient of &Batch size &Number of\\
        ~&rate&regularization&per worker&worker\\
        \hline 
        Prox-gd&0.1&0.001&20&2 \\
        QCMD adagrad&0.02&0.00001&20&2\\
        QRDA adagrad&0.02&0.1&20&2\\
        \hline
        \multicolumn{5}{c}{\textbf{rcv1}}\\
        \hline        
        \multirow{2}*{Method}&Learning&Coefficient of &Batch size &Number of\\
        ~&rate&regularization&per worker&worker\\
        \hline 
        Prox-gd&1.0&0.0001&20&2 \\
        QCMD adagrad&1.0&0.000005&20&2 \\
        QRDA adagrad&0.1&0.5&20&2\\
        \hline 
        \multicolumn{5}{c}{\textbf{MNIST}}\\
        \hline        
        \multirow{2}*{Method}&Learning&Coefficient of &Batch size &Number of\\
        ~&rate&regularization&per worker&worker\\
        \hline 
        Prox-gd&0.001&0.005&16&4 \\
        QCMD adagrad&0.004&0.0001&16&4 \\
        QRDA adagrad&0.01&0.001&16&4\\
        \hline 
        \multicolumn{5}{c}{\textbf{CIFAR-10}}\\
        \hline        
        \multirow{2}*{Method}&Learning&Coefficient of &Batch size &Number of\\
        ~&rate&regularization&per worker&worker\\
        \hline 
        Prox-gd&0.01&0.001&64&2 \\
        QCMD adagrad&0.01&0.0002&64&2\\
        QRDA adagrad&0.01&0.0004&64&2\\
        \hline 
    \end{tabular}
     \end{small}
        \label{setting}
\end{table}

\begin{table}[!htbp] 
    \caption{Comparisons on four datasets in terms of three metrics. ${\dagger}$ is to mark the methods that only quantize the local gradient but the synchronous gradient. }
    \centering
    \begin{small}
    \begin{tabular}{cccc}
        \hline 
        \multicolumn{4}{c}{\textbf{news20}}\\
        \hline

        \textbf{Method}&\textbf{Accuracy(\%)}&\textbf{Sparsity(\%)}&\textbf{Error}\\
        \hline 
        32-bits Prox-gd&    73.92&    56.63&-\\
        32-bits CMD adagrad&    \textbf{97.50}&    85.76&-\\
        Ternary CMD adagrad$^{\dagger}$&    96.50&    82.64&3.13e-09\\
        Threshold CMD adagrad$^{\dagger}$&    97.46&    84.59&\textbf{1.49e-09}\\
        Ternary CMD adagrad&    96.00&    80.30&5.24e-09\\
        Threshold CMD adagrad&    97.19&83.83&2.39e-09\\
        32-bits RDA adagrad&    97.21&    98.21&-\\
        Ternary RDA adagrad$^{\dagger}$&    95.88&    97.52&1.63e-08\\
        Threshold RDA adagrad$^{\dagger}$&    97.16&    \textbf{98.26}&2.92e-09\\
        Ternary RDA adagrad&    95.50&    97.05& 2.65e-08\\
        Threshold RDA adagrad&    97.09&    98.18&2.93e-09\\
        \hline 
        \multicolumn{4}{c}{\textbf{rcv1}}\\
        \hline        
        \textbf{Method}&\textbf{Accuracy(\%)}&\textbf{Sparsity(\%)}&\textbf{Error}\\
        \hline 
        32-bits Prox-gd&    91.96&    33.87&-\\
        32-bits CMD adagrad&    \textbf{95.15}&    72.66&-\\
        Ternary CMD adagrad$^{\dagger}$&    94.97&    71.70&5.07e-08\\
        Threshold CMD adagrad$^{\dagger}$&    95.16&    73.76&\textbf{1.40e-08}\\
        Ternary CMD adagrad&    94.87&    70.12&7.77e-08\\
         Threshold CMD adagrad&    94.99&    73.18&1.45e-08\\
        32-bits RDA adagrad&    93.41&    \textbf{97.56}&-\\
        Ternary RDA adagrad$^{\dagger}$&    91.14&    96.86&1.73e-07\\
        Threshold RDA adagrad$^{\dagger}$&    93.21&    97.54
&2.32e-08\\
        Ternary RDA adagrad&    90.90&    96.43 &3.03e-07\\
        Threshold RDA adagrad&    93.01&    97.39&2.62e-08\\
        \hline 
        \multicolumn{4}{c}{\textbf{MNIST}}\\
        \hline        
        \textbf{Method}&\textbf{Accuracy(\%)}&\textbf{Sparsity(\%)}&\textbf{Error}\\
        \hline 
        32-bits Prox-gd&    94.17&    7.86&-\\
        32-bits CMD adagrad&    98.28&    17.19&-\\
        Ternary CMD adagrad$^{\dagger}$&    98.22&    9.61&3.52e-3\\
        Threshold CMD adagrad$^{\dagger}$&    98.34&    22.08&\textbf{2.77e-3}\\
        Ternary CMD adagrad&    97.45&    10.19&4.82e-3\\
        Threshold CMD adagrad&    \textbf{98.46}&    21.78&2.95e-3\\
        32-bits RDA adagrad&    97.85&    98.27&-\\
        Ternary RDA adagrad$^{\dagger}$&    97.57&    95.27&1.30e-1\\
        Threshold RDA adagrad$^{\dagger}$&    97.84&    98.59&1.98e-2\\
        Ternary RDA adagrad&    97.32&    90.39&1.51e-1\\
        Threshold RDA adagrad&    97.70&    \textbf{98.69}&2.32e-2\\
        \hline 
        \multicolumn{4}{c}{\textbf{CIFAR-10}}\\
        \hline        
        \textbf{Method}&\textbf{Accuracy(\%)}&\textbf{Sparsity(\%)}&\textbf{Error}\\
        \hline 
        32-bits Prox-gd&    72.73&    76.51&-\\
        32-bits CMD adagrad&    \textbf{86.63}&    80.50&-\\
        Ternary CMD adagrad$^{\dagger}$&    85.27&    0.42& 1.18e-1\\
        Threshold CMD adagrad$^{\dagger}$&    86.23&    62.51&2.45e-2\\
        Ternary CMD adagrad&    84.37&    0.20 & 2.58e-1\\
        Threshold CMD adagrad&    84.74&    39.95&2.60e-2\\
        32-bits RDA adagrad&    86.49&    87.11&-\\
        Ternary RDA adagrad$^{\dagger}$&    84.09&    80.05&8.70e-1\\
        Threshold RDA adagrad$^{\dagger}$&    85.57&    \textbf{87.65}&\textbf{2.17e-2}\\
        Ternary RDA adagrad&    82.84&    75.46&9.90e-1\\
        Threshold RDA adagrad&    84.22&    81.86&2.77e-2\\
        \hline 
    \end{tabular}
     \end{small}
        \label{result}
       
\end{table}

\begin{figure*}[th]  
    \centering
     \includegraphics[width=\linewidth]{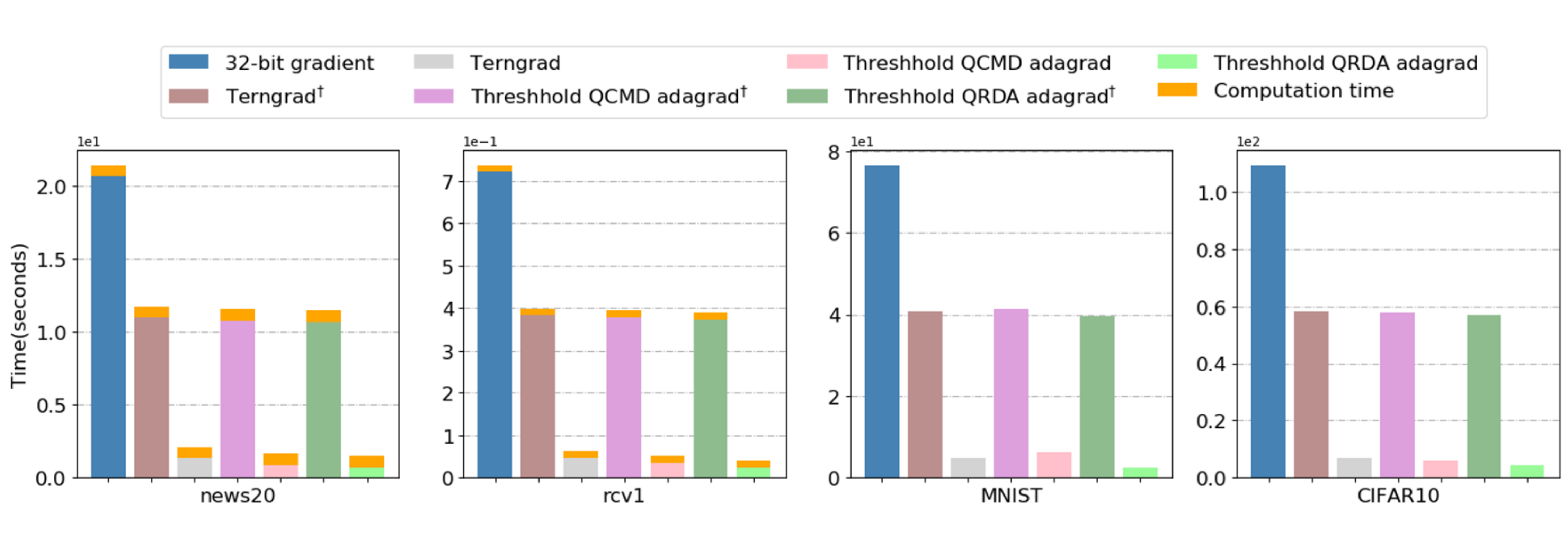}
    \caption{Comparison on the decomposed time consumption for training logistic regression on news20, rcv1 and training Lenet on MNIST, AlexNet on CIFAR10. Each histogram shows one-step decomposed time consumption averaged over the entire training procedure. ${\dagger}$ is to mark the methods that only quantize the local gradient but the synchronous gradient.
    The lower rectangles are the transmission time consumption and the upper rectangles are the computation time consumption.
    The network bandwidth between the parameter server and workers is 100MBps. Since LeNet and AlexNet are trained on GPUs, the computation time consumption is insignificant for MNIST and CIFAR10.
    }    
    \label{time}
    
\end{figure*}

\textbf{Hyperparameter Discussion}. Table \ref{setting} lists the related hyperparameters of
the sparse model training in distributed settings, including the learning rate $\eta$, coefficient of regularization $\lambda$, batch size per worker and the number of the worker.
The optimal choice of  $\eta$ and  $\lambda$  can vary somewhat from dataset to dataset. The structure and random initial of the network also affect them.  In order to find the appropriate hyperparameters, we first select the best learning rate based on cross-validations without the regularization and then find the coefficient of regularization that maximizes the sparsity of the model without reducing the accuracy as much as possible. For ease of comparison, we set the same hyper parameters for corresponding ternary quantization methods. Fig. \ref{rw_lr_on_spr} shows the accuracy and model sparsity as the $\eta$ and $\lambda$ change on MINIST.  The sparsity increase with the $\lambda$.
A strong coefficient of regularization $\lambda$ enforces a highly sparse model which may deteriorate the accuracy. When the learning rate is high or $\lambda$ is large, it is easy to cause oscillation for QCMD adagrad.
For QCMD adagrad, the model sparsity increases not only with the coefficient of regularization $\lambda$ but also the learning rate $\eta$. For QRDA adagrad, model sparsity is mainly affected by $\lambda$, while the learning rate has less effect.

\subsection{Results and Discussions}

We summarize three metrics including  the accuracy, model sparsity and error of the quantized gradient in Table \ref{result}, and report the time consumption in Fig.~\ref{time}. From these results, we draw several conclusions. 

\textbf{Firstly}, the sparsity of model on the four datasets is 98.18\%, 97.39\%, 98.69\% and 81.86\% for QRDA adagrad, respectively. QRDA adagrad is easier to generate extremely sparse models than QCMD adagrad, since 
it uses the same evaluation criteria $\lambda$ for each dimension of the model parameters to determine whether to retain the parameter values. In the case of highly sparse model for QRDA adagrad, the accuracy decreases slightly compared with QCMD adagrad whose sparseness of model is relatively low. For example, on MNIST, the accuracy for threshold QRDA adagrad achieves 97.70\%, which is $0.76\%$ lower than that of threshold QCMD adagrad. 

\textbf{Secondly}, although the sparsity of the model generated by QCMD adagrad is limited. However, it  still performs better than 32 bits Prox-gd in accuracy and sparsity,  since the proximal function $\psi$ use the Bregman divergence to keep the parameter $\bx_{t+1}$ close to $\bx_t$ and it adaptively retains the parameter values for each dimension.  

\textbf{Thirdly}, the noise introduced by the quantization method affects the convergence of accuracy and sparsity of the model. The one-way quantization (only the local gradients are quantized instead of both local gradients and the synchronous gradient) introduces less noise than the double quantization.
The error of threshold quantization is smaller than Terngrad's and it preserves sparsity and accuracy of the model both for QCMD adagrad and QRDA adagrad. Specifically,  for threshold quantization, the accuracy and sparsity of the model for double quantized scheme are relatively close to 32 bits gradient scheme's, while ternary quantization  reduces the sparsity and accuracy of the model on these 4 datasets. 



\textbf{Lastly}, double quantization reduces most of the communication cost.  Fig.~\ref{time} decomposes the time consumption. For example, double quantized threshold QRDA can save $96.06\%$  of the total training time on CIFAR10. Threshold QRDA (only the local gradients are quantized) saves $48.03\%$ of the total training time on CIFAR10. The QRDA adagrad consumes the least communication cost since it generates a highly sparse model.


\section{Conclusion \label{sct_conclusion}}
In this paper, we present distributed QCMD adagrad and QRDA adagrad to accelerate large-scale machine learning. QCMD adagrad and QRDA adagrad combine the gradient quantization and sparse model to reduce the communication cost per iteration in distributed training. We Theoretical analysis  that the convergence rate of QCMD adagrad and QRDA adagrad is comparable with CMD adagrad and RDA adagrad respectively. Considering that a large error of quantization gradients affects the convergence of QCMD adagrad and QRDA adagrad, we adopt the threshold quantization with less quantized error to preserve the model performance and sparsity.
The encouraging empirical results   on  linear models and convolutional neural networks demonstrate the efficacy
and efficiency of the proposed algorithms in balancing communication cost, accuracy, and model sparsity.

{\clearpage
\balance


}

{\clearpage
\appendix









\section{Appendeix} 

\subsection{Proof of Proposition 1}
\begin{proof}
Assume $f_t(\bx)$ is L-smoothness, one obtains 
\begin{equation*}
\begin{split}
 f_t(\bx_{t+1})-f_t(\bx_t) \leq f'_t(\bx_t)^\top(\bx_{t+1}-\bx_t)+\frac{L}{2}||\bx_{t+1}-\bx_t||^2\\
\end{split}    
\end{equation*}
Let $\psi_t(\bx)=\frac{1}{2}\bx^\top \bH_t \bx$, the Mahalanobis norm $||\cdot||_{\psi_t}=\sqrt{<\cdot, \bH_t \cdot>}$ and $||\cdot||_{\psi_t^*}$ be the associated dual norm. Introduce the quantized gradient, we have
\begin{equation*}
\begin{split}
&\quad f_t(\bx_{t+1})-f_t(\bx_t)\\
&\leq \bq_t^\top(\bx_{t+1}-\bx_t)+\frac{L}{2}||\bx_{t+1}-\bx_t||^2+(f'_t(\bx_t)- \bq_t)^\top(\bx_{t+1}-\bx_t)\\
&\leq \bq_t^\top(\bx_{t+1}-\bx_t)+\frac{L}{2}||\bx_{t+1}-\bx_t||^2+\frac{1}{2}||\bx_{t+1}-\bx_t||^2_{\psi_t}\\
&\quad+\frac{1}{2}||\bq_t-f'_t(\bx_t)||^2_{\psi^*_t}\\
\end{split}    
\end{equation*}
 The second inequality follows from Cauchy-Schwarz (with $2ab\leq xa^2+\frac{b^2}{x}$ for any $x>0$). $B_{\psi_t}(\bx_{t+1},\bx_t)=\frac{1}{2}(\bx_{t+1}-\bx_t)^\top \bH_t(\bx_{t+1}-\bx_t)=\frac{1}{2}||\bx_{t+1}-\bx_t||^2_{\psi_t}$ is Bregman divergence.
Let $\psi_t$ be a mirror map 1-strongly convex on $\chi$, then
\begin{equation*}
0\leq\frac{1}{2}||\bx_{t+1}-\bx_t||^2\leq B_{\psi_t}(\bx_{t+1},\bx_t)
\end{equation*}
So,
\begin{equation*}
\begin{split}
\quad f_t(\bx_{t+1})&-f_t(\bx_t)\leq \bq_t^\top(\bx_{t+1}-\bx_t)\\
&+\frac{1}{2}||\bq_t-f'_t(\bx_t)||^2_{\psi^*_t}+(\frac{1}{2}+L)B_{\psi_t}(\bx_{t+1},\bx_t)
\end{split}    
\end{equation*}
Since $\psi_t'(\bx_t)-\psi_t'(\bx_{t+1})=\bH_t(\bx_t-\bx_{t+1})=\eta \bq_t+\eta\phi'(\bx_{t+1})$,
\begin{equation*}
\begin{split}
    &\quad\eta[\phi'(\bx_{t+1})+ \bq_t]^\top(\bx_{t+1}-\bx^*)\\
    &=\left(\psi_t'(\bx_t)-\psi_t'(\bx_{t+1})\right)^\top(\bx_{t+1}-\bx^*)\\
    &=B_{\psi_t}(\bx^*,\bx_t)-B_{\psi_t}(\bx^*,\bx_{t+1})-B_{\psi_t}(\bx_{t+1},\bx_t)
\end{split}
\end{equation*}
If we set the learning rate $\eta=\frac{2}{1+2L}$, we have 
\begin{equation*}
\begin{split}
&\quad f_t(\bx_{t+1})-f_t(\bx_t)\\
&\leq \bq_t^\top(\bx_{t+1}-\bx_t)+\frac{1}{2}||\bq_t-f'_t(\bx_t)||^2_{\psi^*_t}\\
&\quad+\frac{1}{\eta}(B_{\psi_t}(\bx^*,\bx_t)-B_{\psi_t}(\bx^*,\bx_{t+1}))\\
&\quad-\bq_t^\top(\bx_{t+1}-\bx^*)-\phi'(\bx_{t+1})^\top(\bx_{t+1}-\bx^*)\\
&=\bq_t^\top(\bx^*-\bx_t)+\frac{1}{2}||\bq_t-f'_t(\bx_t)||^2_{\psi^*_t}\\
&\quad+\frac{1}{\eta}(B_{\psi_t}(\bx^*,\bx_t)-B_{\psi_t}(\bx^*,\bx_{t+1}))\\
&\quad-\phi'(\bx_{t+1})^\top(\bx_{t+1}-\bx^*)
\end{split}    
\end{equation*}
From $f_t(\bx^*)+\phi(\bx^*)-[f_t(\bx_t)+\phi(\bx_t)+(f'_t(\bx_t)+\phi'(\bx_t))^\top(\bx^*-\bx_t)]\geq 0$,
\begin{equation*}
\begin{split}
&\quad f_t(\bx_{t+1})+\phi(\bx_{t+1})-f_t(\bx^*)-\phi(\bx^*)\\
&\leq \bq_t^\top(\bx^*-\bx_t)-f'_t(\bx_t)^\top(\bx^*-\bx_t)+\phi'(\bx_{t+1})^\top(\bx_{t+1}-\bx^*)\\
&\quad+\frac{1}{2}||\bq_t-f'_t(\bx_t)||^2_{\psi^*_t}\\
&\quad+\frac{1}{\eta}(B_{\psi_t}(\bx^*,\bx_t)-B_{\psi_t}(\bx^*,\bx_{t+1}))-\phi'(\bx_{t+1})^\top(\bx_{t+1}-\bx^*)\\
\end{split}    
\end{equation*}
If $\mathbb{E}[\bq_t]=f'_t(\bx_t)$, we take expectation on both sides of the inequality, we obtains
\begin{equation*}
\begin{split}
&\quad\mathbb{E}_\bq [f_t(\bx_{t+1})+\phi(\bx_{t+1})-f_t(\bx^*)-\phi(\bx^*)]\\
&\leq 0 +\frac{1}{2}\mathbb{E}_\bq||\bq_t-f'_t(\bx_t)||^2_{\psi^*_t}+\frac{1}{\eta}\mathbb{E}_\bq(B_{\psi_t}(\bx^*,\bx_t)-B_{\psi_t}(\bx^*,\bx_{t+1}))\\
&\leq \frac{1}{2}\mathbb{E}_\bq||\bq_t||^2_{\psi^*_t}+\frac{1}{\eta}\mathbb{E}_\bq(B_{\psi_t}(\bx^*,\bx_t)-B_{\psi_t}(\bx^*,\bx_{t+1}))\\
\end{split}    
\end{equation*}
Sum the inequality,
\begin{equation*}
\begin{split}
 &\quad\sum_{t=1}^{T}\mathbb{E}_\bq [f_t(\bx_{t+1})+\phi(\bx_{t+1})-f_t(\bx^*)-\phi(\bx^*)]\\
&\leq \frac{1}{2}\sum_{t=1}^{T}\mathbb{E}_\bq||\bq_t||^2_{\psi^*_t}+\frac{1}{\eta}\sum_{t=1}^{T}\mathbb{E}_\bq(B_{\psi_t}(\bx^*,\bx_t)-B_{\psi_t}(\bx^*,\bx_{t+1}))\\
&= \frac{1}{2}\sum_{t=1}^{T}\mathbb{E}_\bq||\bq_t||^2_{\psi^*_t}+\frac{1}{\eta}\mathbb{E}_\bq B_{\psi_1}(\bx^*,\bx_1)\\
&\quad+\frac{1}{\eta}\sum_{t=1}^{T-1}\mathbb{E}_\bq[B_{\psi_{t+1}}(\bx^*,\bx_{t+1})-B_{\psi_t}(\bx^*,\bx_{t+1})]\\
\end{split}    
\end{equation*}
\end{proof}

\subsection{Proof of Theorem 1}
\begin{proof}
The third item in the conclusion of Proposition 1 can be transformed as below:
\begin{equation*}
\begin{split}
&\frac{1}{\eta}\sum_{t=1}^{T-1}\mathbb{E}_\bq[B_{\psi_{t+1}}(\bx^*,\bx_{t+1})-B_{\psi_{t}}(\bx^*,\bx_{t+1})]\\
=&\frac{1}{\eta}\sum_{t=1}^{T-1}\mathbb{E}_\bq[\frac{1}{2}(\bx^*-\bx_{t+1})^\top(\bH_{t+1}-\bH_t)(\bx^*-\bx_{t+1})]\\
\leq&\frac{1}{2\eta}\max_{t\leq T}||\bx^*-\bx_t||^2_\infty\sum_{i=1}^{d}\mathbb{E}_\bq||f'(\bx)_{1:T,i}||_2\\
&\quad-\frac{1}{2\eta}||\bx^*-\bx_1||_\infty^2\mathbb{E}_\bq<\bh_1,\bf{1}>\\
\end{split}
\end{equation*}
$\bf{1}$ is a vector whose elements are all 1. Since $\mathbb{E}_\bq B_{\psi_{1}}(\bx^*,\bx_1)\leq \frac{1}{2}||\bx^*-\bx_1||^2_\infty\mathbb{E}_\bq<\bh_1,\bf{1}>$,
\begin{equation*}
\begin{split}
&\quad\sum_{t=1}^{T}\mathbb{E}_\bq [f_t(\bx_{t+1})+\phi(\bx_{t+1})-f_t(\bx^*)-\phi(\bx^*)]\\
&\leq  \frac{1}{2}\sum_{t=1}^{T}\mathbb{E}_\bq||\bq_t||^2_{\psi^*_t}+\frac{1}{2\eta}\max_{t\leq T}||\bx^*-\bx_t||^2_\infty\sum_{i=1}^{d}\mathbb{E}_\bq||\bq_{1:T,i}||_2\\
\end{split}    
\end{equation*}
$\frac{1}{2}\sum_{t=1}^{T}\mathbb{E}_\bq||\bq_t||^2_{\psi^*_t}\leq \sum_{i=1}^{d}\mathbb{E}_\bq||\bq_{1:T,i}||_2$ has been proved in Lemma 4 of \cite{duchi2011adaptive}, then
\begin{equation*}
\begin{split}
&\quad\sum_{t=1}^{T}\mathbb{E}_\bq [f_t(\bx_{t+1})+\phi(\bx_{t+1})-f_t(\bx^*)-\phi(\bx^*)]\\
&\leq \sum_{i=1}^{d}\mathbb{E}_\bq||\bq_{1:T,i}||_2+\frac{1}{2\eta}\max_{t\leq T}||\bx^*-\bx_t||^2_\infty\sum_{i=1}^{d}\mathbb{E}_\bq||\bq_{1:T,i}||_2\\
\end{split}    
\end{equation*}
We define that $D_\infty=\max_{t\leq T}||\bx^*-\bx_t||_\infty$, $G_\infty=\max_{t\leq T,i\leq d}||\bq_{1:T,i}||_2$.
\begin{equation*}
\begin{split}
&\quad\sum_{t=1}^{T}\mathbb{E}_\bq [ f_t(\bx_{t+1})+\phi(\bx_{t+1})-f_t(\bx^*)-\phi(\bx^*)]\\
&\leq d\sqrt{T}G_\infty+\frac{1}{2\eta}D_\infty d\sqrt{T}G_\infty\\
\end{split}    
\end{equation*}
Jensen's inequality:
\begin{equation*}
\begin{split}
&\quad\mathbb{E}_\bq[f(\bar{\bx})+\phi(\bar{\bx})]-f(\bx^*)-\phi(\bx^*)]\\
&\leq\frac{1}{T}
\sum_{t=1}^{T}\mathbb{E}_\bq [ f_t(\bx_{t+1})+\phi(\bx_{t+1})-f_t(\bx^*)-\phi(\bx^*)]\\
&\leq \frac{dG_\infty}{\sqrt{T}}+\frac{dG_\infty D_\infty}{2\eta\sqrt{T}}
\end{split}
\end{equation*}
\end{proof}

\subsection{Proof of Proposition 2}
\begin{proof}
Define $\psi_t^*(\bg)$ to be the conjugate dual of $t\phi(\bx)+\frac{1}{\eta}\psi_t(\bx)$:
\begin{equation}\label{psi}
    \begin{split}
       \psi^*_t(\bg)=\sup_{\bx\in\chi}\{<\bg,\bx>-t\phi(\bx)-\frac{1}{\eta}\psi_t(\bx)\}
    \end{split}
\end{equation}
If set $\phi(\bx)=\lambda||\bx||_1$, then
\begin{equation*}
    \begin{split}
    \psi^*_{ti}(\bg)=\left\{
\begin{array}{rcl}
(\bg_i-t\lambda)^2\frac{\eta}{4\bH_{t,ii}},      &      & {g_i-t\lambda     >0     }\\
(\bg_i+t\lambda)^2\frac{\eta}{4\bH_{t,ii}},    &      & {g_i+t\lambda < 0}\\
0,    &      & {otherwise}\\
\end{array} \right. 
    \end{split}
\end{equation*}
So, $||\cdot||_{\psi_t^*}=\sqrt{<\cdot,\frac{\eta}{2\bH_t}\cdot>}$.

Since $\frac{\psi_t}{\eta}$ is $\frac{1}{\eta}$-strongly convex with respect to the norm $||\cdot||_{\psi_t}$, the function $\psi_t^*$ is $\eta$-smooth with respect to $||\cdot||_{\psi^*_t}$.
\begin{equation*}
    \psi_t^*(\by)\leq\psi_t^*(\bx)+\triangledown\psi^*_t(\bx)(\by-\bx)+\frac{\eta}{2}||\by-\bx||^2_{\psi^*_t}
\end{equation*}
Further, a simple argument with the fundamental theorem of the conjugate dual is
\begin{equation}
    \triangledown\psi^*_t(\bg)=\arg\min_{\bx\in\chi}\{-<\bg,\bx>+t\phi(\bx)+\frac{1}{\eta}\psi_t(\bx)\}
\end{equation}
If $\bz_t=\sum_{\tau=1}^t\bq_\tau$, we have
\begin{equation}\label{fundamental_t_4_dual}
    \begin{split}
       \psi'^*_t(-\bz_t)&=\arg\min_{\bx\in\chi}\{<\sum_{\tau=1}^t\bq_\tau,\bx>+t\phi(\bx)+\frac{1}{\eta}\psi_t(\bx)\}\\
       &=\bx_{t+1}
    \end{split}
\end{equation}
To obtain the regret for quantized rda adagrad, we first calculate 
{\small\begin{equation*}
    \begin{split}
&\quad\sum_{t=1}^T[\bq_t^\top(\bx_t-\bx^*)+\phi(\bx_{t+1})-\phi(\bx^*)]\\
&\leq\sum_{t=1}^T[\bq_t^\top \bx_t+\phi(\bx_{t+1})]+ \frac{1}{\eta}\psi_T(\bx^*)+\sup_{\bx\in\chi}\left\{-\sum_{t=1}^T[\bq_t^\top \bx]-T\phi(\bx)-\frac{1}{\eta}\psi_T(\bx)\right\}\\
&=\sum_{t=1}^T[\bq_t^\top \bx_t+\phi(\bx_{t+1})]+ \frac{1}{\eta}\psi_T(\bx^*)+\psi^*_T(-\bz_T)\\
    \end{split}
\end{equation*}}
Since Equation (\ref{psi}) and Equation (\ref{fundamental_t_4_dual}), it is clear that
\begin{equation*}
    \begin{split}
      \psi_T^*(-\bz_T)&= -\sum_{t=1}^T[\bq_t^\top \bx_{T+1}]-T\phi(\bx_{T+1})-\frac{1}{\eta}\psi_T(\bx_{T+1})\\
      &\leq  -\sum_{t=1}^T[\bq_t^\top \bx_{T+1}]-(T-1)\phi(\bx_{T+1})-\phi(\bx_{T+1})-\frac{1}{\eta}\psi_{T-1}(\bx_{T+1})\\
      &\leq\sup_{\bx\in\chi}\left\{-\bz_T^\top \bx-(T-1)\phi(\bx)-\frac{1}{\eta}\psi_{T-1}(\bx)\right\}-\phi(\bx_{T+1})\\
      &=\psi^*_{T-1}(-\bz_T)-\phi(\bx_{T+1})
    \end{split}
\end{equation*}
The first inequality above follows from $\psi_{t+1}\geq\psi_t$. Further, the identity (\ref{fundamental_t_4_dual}) and the fact that $\bz_T-\bz_{T-1}=-\bq_T$ give
\begin{equation*}
    \begin{split}
        &\quad\sum_{t=1}^T[\bq_t^\top \bx_t+\phi(\bx_{t+1})]+ \frac{1}{\eta}\psi_T(\bx^*)+\psi^*_T(-\bz_T)\\
        &\leq \sum_{t=1}^T[\bq_t^\top \bx_t+\phi(\bx_{t+1})]+ \frac{1}{\eta}\psi_T(\bx^*)+\psi^*_{T-1}(-\bz_T)-\phi(\bx_{T+1})\\
        &\leq \sum_{t=1}^T[\bq_t^\top \bx_t+\phi(\bx_{t+1})]+ \frac{1}{\eta}\psi_T(\bx^*)-\phi(\bx_{T+1}) \\
        &\quad+{\psi^{*}}_{T-1}(-\bz_{T-1})-{\psi'_{T-1}}^{*}{(-{\bz_{T-1})}^\top} \bq_T +\frac{\eta}{2}||\bq_T||^2_{\psi^*_{T-1}}\\
        &=\sum_{t=1}^{T-1}[\bq_t^\top \bx_t+\phi(\bx_{t+1})]+ \frac{1}{\eta}\psi_T(\bx^*)+\psi^*_{T-1}(-\bz_{T-1})+\frac{\eta}{2}||\bq_T||^2_{\psi^*_{T-1}}\\
    \end{split}
\end{equation*}
We can repeat the same sequence of steps to see that
\begin{equation*}
    \begin{split}
        &\quad\sum_{t=1}^T[\bq_t^\top \bx_t+\phi(\bx_{t+1})]+ \frac{1}{\eta}\psi_T(\bx^*)+\psi^*_T(-\bz_T)\\
        &\leq \frac{1}{\eta}\psi_T(\bx^*)+\psi_0^*(-\bz_0)+\frac{\eta}{2}\sum_{t=1}^{T}||\bq_t||^2_{\psi^*_{t-1}}\\
        &= \frac{1}{\eta}\psi_T(\bx^*)+\frac{\eta}{2}\sum_{t=1}^{T}||\bq_t||^2_{\psi^*_{t-1}}\\
    \end{split}
\end{equation*}
Take expectation on both sides of the inequality, we get the results in proposition 2. 
\end{proof}

\subsection{Proof of Theorem 2}
\begin{proof}
We set $\delta\geq\max_t||\bq_t||^2_\infty$, $\bH_t=\delta \bI+diag(\bc_t)$, in which case 
\begin{equation*}
    \begin{split}
     ||\bq_t||^2_{\psi_{t-1}^*}
     \leq \frac{\eta}{2}\bq_t^\top diag(\bc_{t})^{-1}\bq_t
    \end{split}
\end{equation*}
\begin{equation*}
    \begin{split}
     \sum_{t=1}^{T}\bq_t^\top diag(\bc_{t})^{-1}\bq_t&\leq2\sum_{i=1}^{d}||\bq_{1:T,i}||_2\\
    \end{split}
\end{equation*}
\begin{equation*}
    \begin{split}
     \sum_{t=1}^{T}||\bq_t||^2_{\psi^*_{t-1}}&\leq  \eta\sum_{i=1}^{d}||\bq_{1:T,i}||_2
    \end{split}
\end{equation*}
So,
\begin{equation*}
    \begin{split}
        &\quad\sum_{t=1}^T[\bq_t^\top \bx_t+\phi(\bx_{t+1})]+ \frac{1}{\eta}\psi_T(\bx^*)+\psi^*_T(-\bz_T)\\
        &\leq \frac{1}{\eta}\psi_T(\bx^*)+\frac{\eta^2}{2}\sum_{i=1}^{d}||\bq_{1:T,i}||_2
    \end{split}
\end{equation*}
We also have 
\begin{equation*}
    \begin{split}
        \psi_T(\bx^*)&=\delta||\bx^*||_2^2+<\bx^*,diag(\bc_T)\bx^*>\\
        &\leq \delta||\bx^*||_2^2+||\bx^*||^2_\infty\sum_{i=1}^d||\bq_{1:T,i}||_2
    \end{split}
\end{equation*}
If $\mathbb{E}[\bq_t]=f'_t(\bx_t)$, take expectation for the quantization operation and set $G_\infty=\max_{t\leq T,i\leq d}||\bq_{1:T,i}||_2$

\begin{equation*}
    \begin{split}
        &\quad\sum_{t=1}^T\mathbb{E}_\bq[f_t(\bx_t)+\phi(\bx_t)-f_t(\bx^*)-\phi(\bx^*)]\\
        &\leq \sum_{t=1}^T\mathbb{E}_\bq[(f'_t(\bx_t)-\bq_t^\top(\bx_t-\bx^*)]\\
        &\quad+\mathbb{E}_\bq\left\{\sum_{t=1}^T[\bq_t^\top(\bx_t-\bx^*)+\phi(\bx_{t+1})-\phi(\bx^*)]\right\}\\
        &\leq \mathbb{E}_\bq\left\{\sum_{t=1}^T[\bq_t^\top \bx_t+\phi(\bx_{t+1})]+ \frac{1}{\eta}\psi_T(\bx^*)+\psi^*_T(-\bz_T)\right\}\\
        &\leq \frac{\delta}{\eta}||\bx^*||_2^2+\frac{1}{\eta}||\bx^*||^2_\infty\sum_{i=1}^d\mathbb{E}_\bq||\bq_{1:T,i}||_2 +\frac{\eta^2}{2}\sum_{i=1}^{d}\mathbb{E}_\bq||\bq_{1:T,i}||_2\\
        &=\frac{\delta}{\eta}||\bx^*||_2^2+(\frac{1}{\eta}||\bx^*||^2_\infty+\frac{\eta^2}{2}) dG_\infty\sqrt{T} 
    \end{split}
\end{equation*}

Finally,
\begin{equation*}
    \begin{split}
        &\quad\frac{1}{T}\sum_{t=1}^T\mathbb{E}_\bq[f_t(\bx_t)+\phi(\bx_t)-f_t(\bx^*)-\phi(\bx^*)]\\
        &\leq \frac{\delta||\bx^*||_2^2}{\eta T}+\frac{(\frac{1}{\eta}||\bx^*||^2_\infty+\frac{\eta^2}{2}) dG_\infty }{\sqrt{T}}
    \end{split}
\end{equation*}
\end{proof}

}
\end{document}